\makeatletter
\AtBeginDocument{%
  \renewcommand{\@noticestring}{%
    Preprint. Code, weights and data: \url{https://github.com/m2b3/CanViT-PyTorch}%
  }%
}
\makeatother

\PassOptionsToPackage{preprint}{neurips_2026}
\documentclass{article}

\PassOptionsToPackage{numbers, super, compress}{natbib}

\usepackage{neurips_2026}

\usepackage[utf8]{inputenc}
\usepackage[T1]{fontenc}
\usepackage{hyperref}
\usepackage{url}
\usepackage{booktabs}
\usepackage{makecell}      %
\setcellgapes{3pt}
\makegapedcells
\usepackage{amsfonts}
\usepackage{amsmath}
\usepackage{amssymb}
\usepackage{amsthm}
\usepackage{nicefrac}

\newtheorem{lemma}{Lemma}[section]
\newtheorem{proposition}[lemma]{Proposition}
\usepackage{microtype}
\usepackage{xcolor}
\usepackage{graphicx}
\usepackage{multirow}
\usepackage{xspace}
\usepackage{wrapfig}
\usepackage{listings}

\usepackage{minitoc}

\noptcrule

\definecolor{lstkw}{RGB}{0,0,153}       %
\definecolor{lstcomment}{RGB}{106,153,85} %
\definecolor{lststring}{RGB}{163,21,21} %
\definecolor{lstbuiltin}{RGB}{43,145,175} %

\newif\ifanonymous \anonymoustrue

\definecolor{archCanvas}{HTML}{e05050}     %
\definecolor{archGlimpse}{HTML}{4080d0}    %
\definecolor{archCls}{HTML}{50b050}        %
\definecolor{archReg}{HTML}{b0b0b0}        %
\definecolor{archVpe}{HTML}{f0a030}        %
\definecolor{archBackbone}{HTML}{d0a8e0}   %
\lstdefinestyle{canvitpython}{
  language=Python,
  basicstyle=\ttfamily\footnotesize,
  keywordstyle=\color{lstkw}\bfseries,
  commentstyle=\color{lstcomment}\itshape,
  stringstyle=\color{lststring},
  emphstyle=\color{lstbuiltin}\bfseries,
  emph={LayerNorm,Linear,Identity,nn,Module,arange,stack,meshgrid,reshape,
        to_multihead,from_multihead,sdpa,apply_2d_rope,compute_2d_rope},
  showstringspaces=false,
  breaklines=false,
  columns=fullflexible,
  keepspaces=true,
  upquote=true,
  frame=single,
  framerule=0.4pt,
  rulecolor=\color{lightgray},
  xleftmargin=0pt,
  xrightmargin=0pt,
}

\newcommand{\adeBestMiou}{45.9}
\newcommand{\adeSingleGlimpseMiou}{38.5}

\newcommand{\adeBestPriorMiou}{27.6}

\newcommand{\adeCheapestBeatMiou}{38.5}
\newcommand{\adeCheapestBeatFlopRatio}{20}
\newcommand{\ameSetrMiou}{27.6}
\newcommand{\adaglimpseEightMiou}{25.7}
\newcommand{\adeMaxT}{21}
\newcommand{\adeLastT}{20}
\newcommand{\bootstrapCiPct}{95}

\newcommand{\inkFrozenBest}{81.1}
\newcommand{\inkMaxT}{21}

\newcommand{\inkFinetunedBest}{84.5}

\newcommand{\benchTimeBudgetS}{20}
\newcommand{\benchMaxIters}{500}
\newcommand{\benchNPasses}{3}
\newcommand{\benchWarmupItersPhrase}{3 warmup iterations}
\newcommand{\benchMinItersPhrase}{5 iterations}
\newcommand{\benchNPassesPhrase}{3 times}
\newcommand{\ablStepsK}{215k}
\newcommand{\ablT}{10}
\newcommand{\ablTLast}{9}
\newcommand{\ablPolicyName}{R-IID}
\newcommand{\ablNRuns}{5}

\newcommand{\inkFtOptimizer}{AdamW}
\newcommand{\inkFtLr}{2.5\times 10^{-5}}
\newcommand{\inkFtWeightDecay}{1.0\times 10^{-4}}
\newcommand{\inkFtGradClip}{1.0}
\newcommand{\inkFtBatchSize}{256}
\newcommand{\inkFtLabelSmoothing}{0.1}
\newcommand{\inkFtWarmupSteps}{25{,}000}
\newcommand{\inkFtTotalSteps}{100{,}080}
\newcommand{\inkFtEpochs}{20}
\newcommand{\inkFtNGlimpses}{4}

\newcommand{\inkFtBpttMode}{full}

\newcommand{\cSixFourCanvasGrid}{64}
\newcommand{\cSixFourAsymRwPairGf}{2.8}
\newcommand{\cSixFourFullRwPairGf}{37.3}
\newcommand{\adePolicyEvalStochN}{10}
\newcommand{\inkFrozenStochN}{10}
\newcommand{\inkFinetunedStochN}{10}
\newcommand{\inkSweepHasData}{1}
\newcommand{\inkSweepN}{10}

\newcommand{\inkSweepHeader}{\textbf{Policy} & \textbf{$c=8$} & \textbf{$c=16$} & \textbf{$c=32$} & \textbf{$c=64$}}
\newcommand{\adeSweepN}{10}

\newcommand{\adeSweepHeader}{\textbf{Policy} & \textbf{$c=8$} & \textbf{$c=16$} & \textbf{$c=32$} & \textbf{$c=64$}}
\newcommand{\canvasGridImpactGrids}{8, 16, 32, 64}
\newcommand{\canvasGridImpactN}{10}
\newcommand{\lowessFrac}{0.25}
\newcommand{\lowessIt}{0}
\newcommand{\lowessNBoot}{1{,}000}
\newcommand{\lowessCiPct}{95}
\newcommand{\lowessTDeltaEarly}{0}
\newcommand{\lowessTDeltaLate}{20}
\newcommand{\flopsCvCanvasDim}{1024}
\newcommand{\flopsCvNLocal}{71}
\newcommand{\flopsCvNPatches}{64}
\newcommand{\flopsCvBackboneRegs}{5}
\newcommand{\projSdpaRatio}{7.2}



\newcommand{\abldelta}[2]{\csname abl#1#2\endcsname}

\usepackage{acro}
\DeclareAcronym{ACV}{short=ACV, long=Active Computer Vision}
\DeclareAcronym{AVFM}{short=AVFM, long=Active-Vision Foundation Model, long-plural-form=Active-Vision Foundation Models}
\DeclareAcronym{ViT}{short=ViT, long=Vision Transformer}
\DeclareAcronym{ANN}{short=ANN, long=Artificial Neural Network}
\DeclareAcronym{MAE}{short=MAE, long=Masked Autoencoder}
\DeclareAcronym{MLP}{short=MLP, long=Multi-Layer Perceptron, long-plural-form=Multi-Layer Perceptrons}
\DeclareAcronym{CNN}{short=CNN, long=Convolutional Neural Network}
\DeclareAcronym{RoPE}{short=RoPE, long=Rotary Position Embeddings}
\DeclareAcronym{BPTT}{short=BPTT, long=Backpropagation Through Time}
\DeclareAcronym{SDPA}{short=SDPA, long=Scaled Dot Product Attention}
\DeclareAcronym{RL}{short=RL, long=Reinforcement Learning}
\DeclareAcronym{ROI}{short=ROI, long=region of interest, long-plural-form=regions of interest}
\DeclareAcronym{LLM}{short=LLM, long=Large Language Model}
\DeclareAcronym{FLOP}{short=FLOP, long=floating point operation}
\DeclareAcronym{RFF}{short=RFF, long=Random Fourier Features}
\DeclareAcronym{PCA}{short=PCA, long=Principal Component Analysis}

\newcommand{\Dcan}{D_{\text{can}}}
\newcommand{\Dbb}{D_{\text{bb}}}
\newcommand{\Ncan}{N_{\text{can}}}

\newcommand{\Ztgt}{\mathbf{Z}^*}
\newcommand{\ztgt}{\mathbf{z}^*}
\newcommand{\Ztrecon}{\hat{\mathbf{Z}}_t}
\newcommand{\ztrecon}{\hat{\mathbf{z}}_t}

\title{CanViT: Toward Active-Vision Foundation Models}

\author{%
  Yoha\"i-Eliel Berreby$^{1,2}$\quad
  Sabrina Du$^{1,2}$\quad
  Audrey Durand$^{2,3}$\quad
  B.~Suresh Krishna$^{1}$ \\[3pt]
  {\normalfont $^{1}$McGill University\quad $^{2}$Mila - Quebec AI Institute\quad $^{3}$Universit\'e Laval} \\[3pt]
  {\normalfont \texttt{me@yberreby.com}\quad \texttt{sabrina.du@mail.mcgill.ca}} \\
  {\normalfont \texttt{audrey.durand@ift.ulaval.ca}\quad \texttt{suresh.krishna@mcgill.ca}}%
}

\begin{document}

\doparttoc
\faketableofcontents

\maketitle

\begin{abstract}
Active computer vision promises efficient, biologically plausible perception through sequential, localized glimpses,
but lacks scalable general-purpose architectures and pretraining pipelines,
leaving Active-Vision Foundation Models (AVFMs) underexplored.
We introduce \textbf{CanViT}, the first task- and policy-agnostic AVFM.
CanViT uses scene-relative RoPE to bind
a retinotopic Vision Transformer backbone
and a spatiotopic scene-wide latent workspace, the \emph{canvas}.
Efficient interaction with this high-capacity working memory
is supported by Canvas Attention,
a novel asymmetric cross-attention mechanism.
We decouple \emph{thinking} (backbone-level) and \emph{memory} (canvas-level),
eliminating canvas-side self-attention and fully-connected layers
to achieve fast sequential inference and scalability to high output resolutions.
We propose a label-free active vision pretraining scheme,
\textbf{policy-agnostic passive-to-active dense latent distillation}:
reconstructing scene-wide DINOv3 embeddings
from sequences of low-resolution glimpses with randomized locations, zoom levels, and lengths.
We pretrain CanViT-B from a random initialization
on 13.2 million ImageNet-21k scenes---an order of magnitude more than previous active models---and 1 billion random glimpses,
in 166 hours on a single H100.
On ADE20K segmentation,
a frozen CanViT-B achieves \adeSingleGlimpseMiou\% mIoU
in a single low-resolution glimpse,
outperforming the best active model's \adeBestPriorMiou\%
with \adeCheapestBeatFlopRatio{}x fewer inference FLOPs
as well as its FLOP- or input-matched DINOv3 teacher.
Given additional glimpses, CanViT-B reaches \adeBestMiou\% ADE20K mIoU.
On ImageNet-1k classification,
CanViT-B also sets a new active-vision state of the art,
with \inkFinetunedBest\% top-1 accuracy after fine-tuning.
CanViT generalizes to longer rollouts, larger scenes, and new policies.
Our work narrows the wide gap between passive and active computer vision,
demonstrating the potential of task- and policy-agnostic AVFM pretraining.
\end{abstract}

\section{Introduction}
\label{sec:introduction}

Deep \acp{ANN} have achieved outstanding performance in a variety of computer vision tasks,
and proven valuable to life sciences research
as computational models of biological visual processing
\cite{yaminsPerformanceoptimizedHierarchicalModels2014,
yaminsUsingGoaldrivenDeep2016,
schrimpfBrainScoreWhichArtificial2018,
zhuangUnsupervisedNeuralNetwork2021,
bakhtiariFunctionalSpecializationVisual2021,
mehrerEcologicallyMotivatedImage2021,
raugelDisentanglingFactorsConvergence2025}.
This practical and scientific success has largely hinged on
vision encoders which process individual frames independently and passively,
without the ability to reuse previous computation to guide further processing in a recurrent, active, human-like manner.

Unlike most \acp{ANN},
humans sample their visual environment
by actively and frequently orienting their sensory apparatus towards \acp{ROI},
through gaze shifts.
This process is inherently sequential,
and involves strategic planning \cite{yarbusEyeMovementsVision1967,hoppeMultistepPlanningEye2019},
integration of evidence across time in visual working memory \cite{BADDELEY197447,melcherPersistenceVisualMemory2001},
and top-down recurrent feedback---rich conditioning of early visual pathways by signals from higher brain areas, allowing each fixation's visual information to be processed in a contextually informed manner, based on what was seen earlier
\cite{raoPredictiveCodingVisual1999,
gilbertTopdownInfluencesVisual2013,
karEvidenceThatRecurrent2019}, rather than \emph{tabula rasa}.

\begin{figure}[t]
  \centering
  \includegraphics[width=\linewidth]{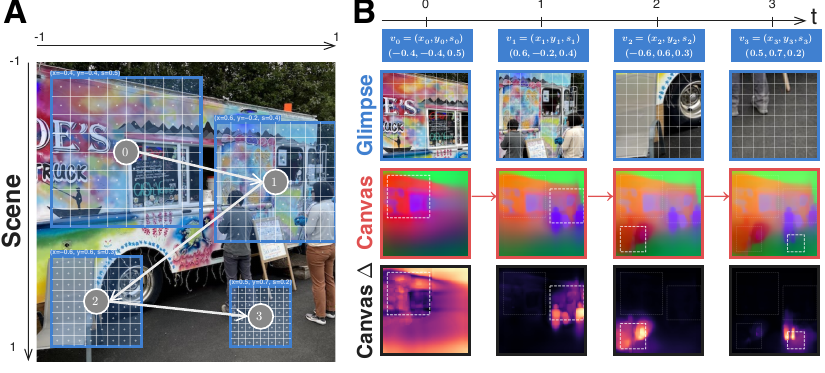}
  \caption{\textbf{A CanViT rollout.}
    We consider a high-resolution scene (\textbf{A}).
    At each timestep $t$,
    CanViT ingests a $128^2\,\text{px}$ \emph{glimpse} (\textbf{B}, 1st row),
    a crop extracted at a \emph{viewpoint} with center $(x_t, y_t) \in [-1,+1]^2$ and scale (zoom level) $s_t \in (0,1]$.
    This updates a scene-wide latent representation, the \emph{canvas},
    with which CanViT integrates broad context and fine detail from variable-scale glimpses,
    extrapolates to unobserved regions,
    and conditions visual processing on a working understanding of the scene.
    We visualize the canvas via Principal Component Analysis across tokens (\textbf{B}, 2nd row),
    and canvas updates $(\Delta)$ as cosine dissimilarity heatmaps across consecutive timesteps (\textbf{B}, 3rd row).
  }
  \label{fig:example-canvit-rollout}
  \vspace{-1em}%
\end{figure}

Drawing inspiration from human vision,
various \ac{ACV} models, which process visual scenes through sequential, localized glimpses, have been proposed
\cite{mnihRecurrentModelsVisual2014,
DRAMBaMK14,
matheReinforcementLearningVisual2016,
ablavatski2017enriched,
elsayedSaccaderImprovingAccuracy2019,
wangGlanceFocusDynamic2020,
papadopoulosHardAttentionScalableImage2021,
seifiGlimpseAttendandExploreSelfAttentionActive2021,
liuImprovedEfficiencyBased2023,
AMEPardylRK0T23,
pardylAdaGlimpseActiveVisual2025,
wangEmulatingHumanlikeAdaptive2025,
pourrahimiNeuralSignaturesAssociational2025,
renEndToEndInstanceSegmentation2017,
liModelingHumanEye2023,
olszewskiTORETokenRecycling2025}.
These models often seek biological relevance and computational efficiency.
Yet, despite their theoretical advantages, they have struggled to match the accuracy, efficiency, flexibility and representational richness of their passive counterparts.
This disconnect is particularly striking on dense prediction,
which is unsupported by most existing active vision models;
the few exceptions lag dramatically behind passive models
on standard benchmarks like ADE20K segmentation~\cite{zhouSceneParsingADE20K2017} when evaluated on it~\cite{AMEPardylRK0T23,pardylAdaGlimpseActiveVisual2025}.

An active vision model can be considered along three axes:
making sense of what it sees at a given moment (instantaneous vision);
updating a persistent, evolving understanding of the scene,
which may in turn inform further processing (memory);
and deciding where and at what zoom level to look next (action selection).
The first two axes define an observer's ability to understand, persist, and recall visual inputs,
while the last defines its input-selection strategy, or sensory policy.
They do not play comparable roles:
given perfect instantaneous vision and memory,
naive and strategic policies should reach the same accuracy after enough glimpses of a static scene,
but no policy can make up for a poor observer.
Yet, prior \ac{ACV} work has often focused on action selection
\cite{mnihRecurrentModelsVisual2014,
DRAMBaMK14,
matheReinforcementLearningVisual2016,
ablavatski2017enriched,
elsayedSaccaderImprovingAccuracy2019,
wangGlanceFocusDynamic2020,
papadopoulosHardAttentionScalableImage2021,
seifiGlimpseAttendandExploreSelfAttentionActive2021,
liuImprovedEfficiencyBased2023,
AMEPardylRK0T23,
pardylAdaGlimpseActiveVisual2025,
wangEmulatingHumanlikeAdaptive2025,
pourrahimiNeuralSignaturesAssociational2025}.
Concurrently, passive-vision foundation models like DINOv3 \cite{simeoniDINOv32026}
excel at general-purpose instantaneous vision
but lack essential active-vision components,
as they have no concept of localized glimpses or memory.

Focusing on the intersection of instantaneous vision and memory,
we sought to build a task- and policy-agnostic \ac{AVFM}:
a vision model capable of understanding the spatial and semantic structure of scenes
across arbitrary sequences of glimpses,
with rich representations that transfer across tasks and viewing policies.
This policy-agnostic approach disentangles ``how to see in an active-vision setting'' from ``where to look,''
thus freeing active-vision pretraining from the complexity of \ac{RL}.
Policy agnosticism may also be desirable for real-world deployment,
where physical limitations (e.g., a motorized camera's range of motion and optical zoom levels) may preclude the use of a general-purpose policy.

\textbf{Contributions.}
We introduce the Canvas Vision Transformer (CanViT, Figure~\ref{fig:example-canvit-rollout}),
a recurrent architecture
built around a latent scene-wide representation called the \emph{canvas} (\S\ref{sec:arch})
and pretrained with a novel passive-to-active distillation scheme (\S\ref{sec:pretraining}).
We evaluate CanViT-B
on ADE20K semantic segmentation
and ImageNet-1k~\cite{russakovskyImageNetLargeScale2015} classification (\S\ref{sec:experiments}).
CanViT-B sets a new state of the art among active vision models on both ADE20K and ImageNet-1k
while transferring across policies, temporal horizons, and canvas resolutions without retraining.

\section{Related work}
\label{sec:related-work}

\textbf{Deep active vision} models typically process sequences of \emph{glimpses}---fixed- or variable-scale crops extracted from a larger image or video.
This line of research traces back to the Recurrent Attention Model~\cite{mnihRecurrentModelsVisual2014},
and remained largely confined to simple tasks such as digit recognition~\cite{DRAMBaMK14,ablavatski2017enriched}
until 2019, when Saccader~\cite{elsayedSaccaderImprovingAccuracy2019}
achieved 75\% ImageNet-1k~\cite{russakovskyImageNetLargeScale2015} top-1 accuracy by
introducing an intermediate pretraining step to stabilize learning.
GFNet~\cite{wangGlanceFocusDynamic2020} and AdaptiveNN~\cite{wangEmulatingHumanlikeAdaptive2025}
later showed that active vision could deliver computational efficiency gains on real-world tasks,
although both remained structurally limited to classification tasks and fixed zoom levels.

\textbf{Dense prediction in active vision} has remained underexplored, as most \ac{ACV} architectures cannot produce the scene-wide, spatially dense outputs required in tasks like semantic segmentation or depth estimation.
Among the few exceptions, AME~\cite{AMEPardylRK0T23} and AdaGlimpse~\cite{pardylAdaGlimpseActiveVisual2025}
achieve dense prediction through post-hoc expansion of encoder outputs:
at each timestep, a MAE-style~\cite{heMaskedAutoencodersAre2022} Transformer decoder receives all encoded glimpse tokens
with a full grid of learnable mask tokens and performs self-attention over the entire grid to produce scene-wide predictions.
This becomes intractable at high scene resolutions, where active vision is most appealing.
Despite being the state of the art on active ADE20K segmentation,
these models respectively achieve only \ameSetrMiou\% and \adaglimpseEightMiou\% mIoU.

\textbf{Dense latent distillation from self-supervised \acsp{ViT}} allows the visuospatial intelligence acquired through extensive pretraining to be quickly transferred into randomly-initialized models.
The largest DINOv2~\cite{oquab2024dinov}/v3~\cite{simeoniDINOv32026}
models were distilled into smaller \acp{ViT}
using the same dataset and objective as during pretraining.
Proteus~\cite{zhang2025accessing}
distilled DINOv2-\{g,L\}/14~\cite{oquab2024dinov}
into a smaller model using 100$\times$ less data than during pretraining
and a simple loss combining CLS token matching with dense feature matching.
Our distillation scheme follows a similar philosophy to Proteus,
but transfers across problem settings rather than across model sizes:
we use a passive teacher's visual world knowledge to teach an active student
how to see from arbitrary glimpse sequences.

\textbf{Cross-attention for dimensionality/computation decoupling}
was pioneered by the Set Transformer~\cite{leeSetTransformerFramework2019},
popularized by Perceiver models~\cite{jaeglePerceiverGeneralPerception2021,jaeglePerceiverIOGeneral2021},
then generalized to iterative bidirectional routing by Recurrent Interface Networks (RINs)~\cite{jabriScalableAdaptiveComputation2023}.
Like RINs, CanViT alternates read and write cross-attention across depth and time.
Unlike Perceiver and RINs, CanViT ingests external input on its few-token side (glimpse), with latents on the many-token side (canvas).
Moreover, CanViT's many-token side forgoes not only self-attention, but also all fully-connected layers:
canvas tokens never go through \acp{MLP},
QKVO projections in cross-attention,
or GRU~\cite{chungEmpiricalEvaluationGated2014}/LSTM~\cite{hochreiterLongShortTermMemory1997} recurrent gates.
Instead, canvas tokens evolve solely via our Canvas Attention mechanism.

\textbf{Latent-space recurrent reasoning} iterates over a fixed external input in a weight-tied fashion,
decoupling representational capacity from computational depth
to enable improved algorithmic reasoning~\cite{dehghaniUniversalTransformers2018,yangLoopedTransformersAre2023,saunshiReasoningLatentThoughts2024,wangHierarchicalReasoningModel2025,jolicoeur-martineauLessMoreRecursive2025}
and flexible test-time compute allocation~\cite{gravesAdaptiveComputationTime2017,baninoPonderNetLearningPonder2021}.
This paradigm has recently seen renewed interest,
both in the \ac{LLM} space~\cite{haoTrainingLargeLanguage2025,geipingScalingTestTimeCompute2025}
and for its ability to produce small yet highly capable models~\cite{wangHierarchicalReasoningModel2025,jolicoeur-martineauLessMoreRecursive2025}.
Leveraging top-down recurrent feedback allows previous computation to be reused,
without losing most of it to ephemeral step-wise outputs.
Active vision offers an opportunity to generalize this framework
by allowing each processing step to benefit from a new perspective on the scene
instead of operating on a constant input.
CanViT achieves latent-space recurrent reasoning
with a semantically rich (rather than pixel-like) workspace, the \emph{canvas},
which provides top-down recurrent feedback for early layers to build upon.

\section{Preliminaries}
\label{sec:preliminaries}

\textbf{Definitions: Scenes, Glimpses, Viewpoints.}
For a timestep $t \in \mathbb{N}$,
we consider a bounded 2D \emph{scene},
which can be formally represented as
a function $\psi_t : [-1, +1]^2 \to \mathbb{R}^3$
mapping continuous-valued $(x,y)$ coordinates to RGB values.
In our experiments, we consider static scenes,
with $\psi_t = \psi_0$ for all timesteps.
We define a \emph{glimpse}
as a fixed-resolution crop,
extracted at a \emph{viewpoint} $\mathbf{v}_t = (x_t, y_t, s_t)$,
where $(x_t, y_t) \in [-1,+1]^2$ is the crop center in scene coordinates
and $s_t \in (0, 1]$ is the crop's \emph{scale}, or half-side-length.
That is, the crop spans the scene coordinates
$[x_t - s_t, x_t + s_t] \times [y_t - s_t, y_t + s_t]$,
covering a fraction $s_t^2$ of the scene's surface area.
Regardless of its scale $s_t$,
the crop is resized to a fixed resolution of $H_g \times W_g$ pixels,
which determines its information capacity;
under that constraint, $s_t$ smoothly controls the tradeoff between spatial coverage and perception of detail.

\textbf{General-Purpose, Spatially-Grounded Active Vision.}
We wish for our model to build up a general-purpose understanding of the \emph{scene}
through a sequence of \emph{glimpses} that it perceives,
allowing each additional processing step
to build upon and refine this understanding.
This evolving latent representation should be readily decodable into predictions at every timestep,
whether for non-spatial, global prediction tasks like object classification
or spatially-grounded, dense tasks like semantic segmentation or depth estimation.
Dense predictions require explicit architectural handling,
as the active vision setting breaks the direct,
connectivity-based mapping between input and output feature maps
that passive \acp{CNN} and \acp{ViT} commonly exploit:
glimpses need not align with the scene from which they are extracted.

To address this problem, we introduce the \textbf{Canvas Vision Transformer}, or \textbf{CanViT} (\S\ref{sec:arch}),
a recurrent vision architecture built around a scene-wide latent representation called the \textbf{canvas}.

\section{The Canvas Vision Transformer (CanViT)}
\label{sec:arch}

\begin{figure}[!ht]
  \centering
  \includegraphics[width=0.85\linewidth]{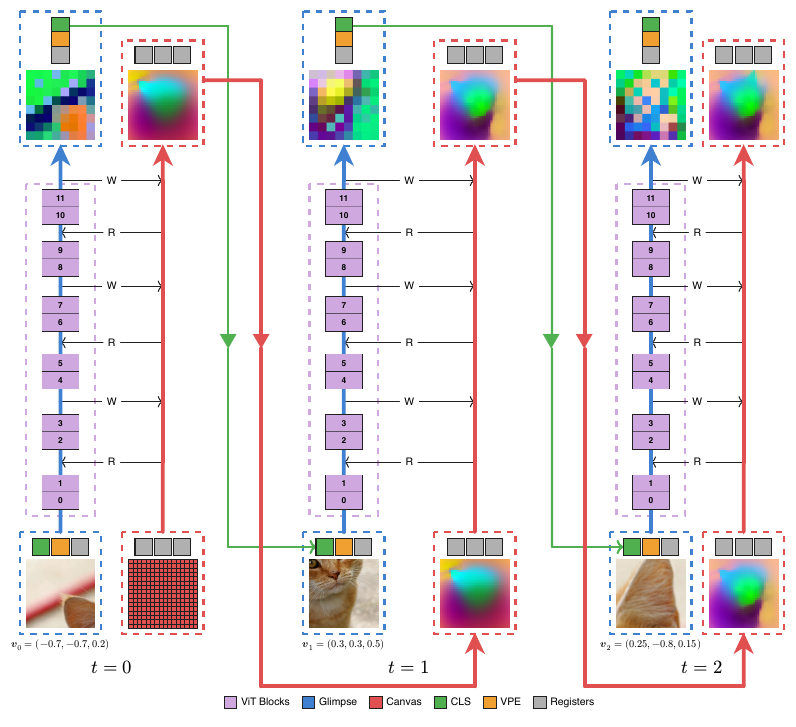}
  \caption{\textbf{CanViT architecture diagram.}
    We adopt a dual-stream structure, equipping a \textbf{ViT backbone} (\textcolor{archBackbone}{\textbf{purple}}, left-hand columns), which processes localized \textbf{glimpses} (\textcolor{archGlimpse}{\textbf{blue}}), with a \textbf{canvas} (\textcolor{archCanvas}{\textbf{red}}, right-hand columns), a fine-grained scene-wide spatio-semantic memory.
    At each timestep $t$, a glimpse is extracted from a viewpoint $\mathbf{v}_t = (x_t, y_t, s_t)$, patchified, and processed through the backbone, alongside a recurrent CLS token (\textcolor{archCls}{\textbf{green}}) and a Viewpoint Encoding (VPE) token (\textcolor{archVpe}{\textbf{orange}}).
    The canvas regularly interacts with the glimpse stream via \textbf{Canvas Attention} (Figure~\ref{fig:canvas-attention}), alternating between read (R) and write (W) operations
    to, respectively, condition the backbone's processing on the canvas and populate the canvas.
    Both streams are equipped with register tokens (\textcolor{archReg}{\textbf{gray}})~\cite{darcetVisionTransformersNeed2023}.
  }
  \label{fig:canvit-high-level-arch}
\end{figure}

The CanViT architecture (Figure~\ref{fig:canvit-high-level-arch})
formulates active-vision processing
as the interaction between a high-capacity memory stream (the \emph{canvas})
and a \ac{ViT}~\cite{dosovitskiyImageWorth16x162020} backbone's compact glimpse processing stream.
These streams interact bidirectionally through
Canvas Attention (Figure~\ref{fig:canvas-attention}),
a mechanism that allows the backbone to efficiently pull information from the canvas and send updates to it.

The \textbf{glimpse stream}, made up of $\Dbb$-dimensional glimpse tokens, is largely ephemeral.
Each glimpse is extracted from the scene at a given viewpoint,
split into $16^2\,\text{px}$ patches,
and fed to the backbone alongside
ephemeral register tokens~\cite{darcetVisionTransformersNeed2023},
a recurrent CLS token,
and a viewpoint encoding (VPE) token derived from the glimpse's position and zoom level.
Since spatial alignment between consecutive patch grids cannot be assumed,
as a glimpse may be taken from any position and at any zoom level,
patch tokens cannot be directly forwarded across time without destroying their grid structure.
Instead, CanViT persists relevant information from glimpses via the canvas stream.

The \textbf{canvas stream}, made up of $\Dcan$-dimensional canvas tokens, is fully persistent.
The canvas acts as a scene-wide spatio-semantic memory,
which can function as a cognitive map~\cite{tolmanCognitiveMapsRats1948} of the scene.
It comprises a few \emph{canvas registers}, which act as a non-spatial memory,
and a large $H \times W$ spatial grid of \emph{canvas patches}
tiling the $[-1, +1]^2$ scene coordinate space.
At the start of each rollout, this grid is broadcasted to the desired size
from a single learnable initial patch,
enabling the canvas resolution to be set at inference time.
Each canvas patch maps onto a fixed scene region, regardless of the current viewpoint,
thus allowing direct token-wise decoding into dense predictions
and unbroken gradient flow across time.
After initialization,
the canvas is read from and written to by consecutive Canvas Attention layers,
whose outputs are residuals injected into each stream.
No \acp{MLP} or self-attention layers are applied to canvas tokens,
which evolve solely by interacting with the backbone's glimpse tokens via Canvas Attention Write operations.
This restriction is key to CanViT's efficiency, as the canvas stream is designed to accommodate a much larger number of tokens than the glimpse stream.

\textbf{Scene-Relative Rotary Position Embeddings (SR-RoPE).}
We compute 2D \acs{RoPE}~\cite{suRoFormerEnhancedTransformer2024,heoRotaryPositionEmbedding2025}
from the centers of glimpse patches and canvas patches
in the scene's $[-1,+1]^2$ coordinate system,
both in the ViT backbone's self-attention and in Canvas Attention layers.
The positions of glimpse patch centers depend on the current viewpoint $(x_t, y_t, s_t)$,
with their relative distances implicitly communicating the viewing scale $s_t$ (zoom):
a zoomed-in glimpse's patches are tightly clustered in a small scene region,
while a zoomed-out glimpse's tokens span a wider region (Figure~\ref{fig:example-canvit-rollout}).
The positions of canvas patches are constant for any given canvas grid size,
since they uniformly tile the scene.
Consistent use of scene-relative coordinates
binds the retinotopic glimpse stream and the spatiotopic canvas stream
with a shared reference frame,
while \ac{RoPE} enables generalization across patch grid sizes.

\textbf{Canvas Attention} (Figure~\ref{fig:canvas-attention}\textbf{A}; pseudocode in Appendix~\ref{supp:canvas-attention-pseudocode}),
a mechanism based on cross-attention,
enables efficient interaction between CanViT's relatively small set of glimpse tokens
and its much larger set of canvas tokens,
allowing CanViT to support fine-grained scene/canvas grids without incurring the quadratic cost of self-attention (Figure~\ref{fig:canvas-attention}\textbf{B}).
We alternate between \emph{Read} and \emph{Write} operations
along depth (and, implicitly, time)
using a \emph{stride} of 2 ViT blocks in CanViT-B.
In a \emph{Read}, glimpse tokens query the canvas;
in a \emph{Write}, canvas tokens query the glimpse.
In both cases, the cross-attention output is added back to the querying stream via residual addition.
SR-RoPE makes this process spatially aware,
allowing Canvas Attention to bind the two streams.
Unlike standard cross-attention implementations
(e.g.~MultiHeadAttention in PyTorch~\cite{anselPyTorch2Faster2024} or Flax NNX~\cite{jax2018github,flax2020github}),
Canvas Attention is \emph{asymmetric}: it restricts Query, Key, Value and Output (QKVO) projections
to one token set (glimpse-side tokens),
applying only LayerNorm, RoPE (for Queries/Keys) and element-wise residual addition
to the other one (canvas-side tokens).

\textbf{Asymmetric projections.}
The glimpse token count $N_g$ must be kept low due to
the quadratic cost of the ViT backbone's self-attention,
and it is desirable for the canvas to have high information capacity
both \emph{spatially}, by tiling the scene in a fine-grained manner with a large number $H \times W$ of canvas patch tokens (making up the bulk of the $\Ncan$ canvas tokens),
and \emph{semantically} at any given position within the scene,
with a large canvas embedding dimension $\Dcan$.
This makes the asymmetric design of Canvas Attention highly advantageous,
as the FLOP footprint of a single canvas-side projection relative to the \ac{SDPA} call that it accompanies would be:
\begin{equation}
  \frac{\text{projection FLOPs}}{\text{SDPA FLOPs}}
  = \frac{2 \Ncan \Dcan^2}{4 N_g \Ncan \Dcan}
  = \frac{\Dcan}{2 N_g}\,,
\end{equation}
which is a $\projSdpaRatio{}\times$ ratio with $\Dcan = \flopsCvCanvasDim{}$ and $N_g = \flopsCvNLocal{}$ (\flopsCvNPatches{} glimpse patches, \flopsCvBackboneRegs{} registers, and CLS + VPE tokens).
The FLOP savings from asymmetric cross-attention projections
are magnified when using fewer glimpse patches and more canvas patches (Figure~\ref{fig:canvas-attention}\textbf{C}).
In CanViT-B,
with a $\cSixFourCanvasGrid{} \times \cSixFourCanvasGrid{}$ canvas patch grid, adding canvas-side QKVO projections would increase the cost of each Canvas Attention Read/Write pair
from \cSixFourAsymRwPairGf{} to \cSixFourFullRwPairGf{} GFLOPs.

\begin{figure}[t]
  \centering
  \includegraphics[width=\linewidth]{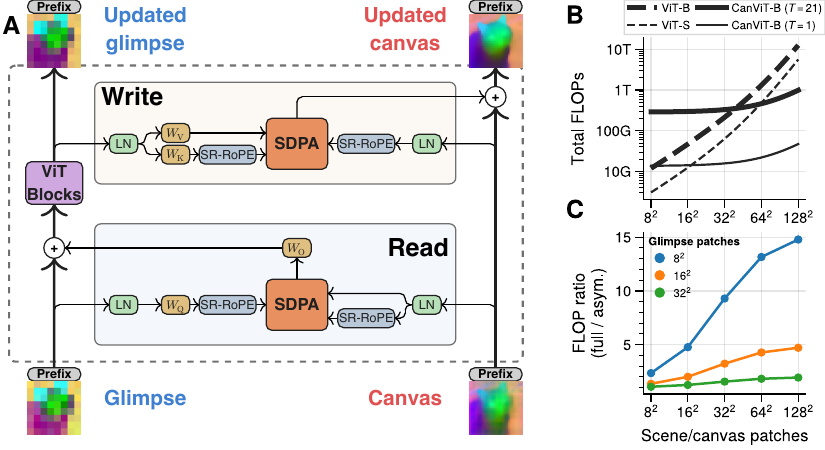}
  \caption{\textbf{A.}~A Canvas Attention Read-Write pair.
  \textbf{B.}~Total inference FLOPs vs output patch grid.
  \textbf{C.}~Relative cost of canvas-side QKVO projections, per Canvas Attention Read/Write pair.}
  \label{fig:canvas-attention}
\end{figure}

\textbf{Viewpoint Encoding (VPE) Token.}
We supplement SR-RoPE,
which distributes the encoding of viewpoint position and scale over glimpse patches,
with a dedicated viewpoint encoding (VPE) token
that concentrates the viewpoint triplet $(x_t, y_t, s_t)$
into a single glimpse-side token.
The VPE token is a non-essential architectural affordance,
meant to facilitate future end-to-end policy learning
by allowing the next viewpoint to be decoded from a rich transformation of the current viewpoint.
We provide additional details on VPE in Appendix~\ref{supp:vpe}
and ablate it in Table~\ref{tab:ablations}\,\textbf{k}.

\section{Policy-agnostic passive-to-active dense latent distillation}
\label{sec:pretraining}

An active vision model should (1) make sense of what it sees,
(2) integrate observations into a persistent scene representation,
and (3) decide where to look next.
Here, we ask: how can we teach CanViT (1) and (2)
in a task-agnostic, spatially aware, label-free manner,
while remaining robust to the choice of policy (3)?
We achieve this via
\emph{policy-agnostic passive-to-active dense latent distillation},
which combines highly informative reconstruction targets with rollout randomization.

\subsection{Passive-to-active dense distillation}
\label{subsec:distillation}

A natural pretext task for active-vision pretraining is \emph{scene reconstruction}:
training the model to produce a best-guess approximation of the overall scene from a sequence of glimpses.
This incentivizes understanding of the spatial and semantic structure of scenes,
in order to faithfully extrapolate to unseen regions and details.
Since we wish for CanViT to iteratively build a semantically rich ``mental image,''
we formulate this reconstruction objective in DINOv3 \cite{simeoniDINOv32026} latent space,
rather than in pixel space like passive \acp{MAE} \cite{heMaskedAutoencodersAre2022}
and some active vision models \cite{pardylAdaGlimpseActiveVisual2025}.

\textbf{Teacher.}
We build upon DINOv3 \cite{simeoniDINOv32026},
a self-supervised vision model family
whose dense representations deliver outstanding frozen-feature transfer
to classification, segmentation, depth estimation, and other downstream tasks.
We use DINOv3 ViT-B as a \emph{high-resolution scene-wide spatio-semantic teacher},
which produces patch tokens (dense features) and a global CLS token.
These highly informative reference embeddings represent an idealized scene understanding
for CanViT to match using sequences of cheap, partial glimpses.
High-resolution teacher inference is much more computationally intensive than a single CanViT forward pass;
however, since the teacher is frozen,
we precompute reference features once,
storing them for subsequent use across epochs and hyperparameter sweeps.
We apply per-position z-score standardization to reconstruction targets.

\textbf{Decoding.} At each timestep $t$,
CanViT produces updated canvas tokens,
including canvas patches $\mathbf{C}_t \in \mathbb{R}^{H \times W \times \Dcan}$ and canvas registers,
and an updated CLS token $\mathbf{h}_t \in \mathbb{R}^{\Dbb}$.
We apply layer normalization \cite{baLayerNormalization2016},
then decode into DINOv3-space reconstructions via token-wise linear projections:
\begin{equation}
  \Ztrecon = W_C \cdot \text{LayerNorm}(\mathbf{C}_t), \quad
  \ztrecon = W_h \cdot \text{LayerNorm}(\mathbf{h}_t)
\end{equation}

\textbf{Loss.} Our loss combines patch-level and CLS-level reconstruction, averaged across space and time:
\begin{equation}
  \mathcal{L} = \frac{1}{T} \sum_{t=0}^{T-1}
  \left[
    \frac{1}{HW} \|\Ztrecon - \Ztgt\|_F^2 + \|\ztrecon - \ztgt\|^2
  \right]
\end{equation}

\subsection{Policy agnosticism}
\label{subsec:policy-agnosticism}

\textbf{Dual rollouts.}
For each scene, we run two independent rollouts from a freshly-initialized canvas, averaging their losses.
These rollouts and their viewpoint sampling policies are referred to as R-IID and F-IID,
which only differ by their behavior at $t=0$.
R-IID (Random-then-IID) rollouts sample random viewpoints at all timesteps, including $t=0$,
ensuring robustness to arbitrary rollout starts.
F-IID (Full-then-IID) rollouts always start with the full-scene zoomed-out viewpoint $(x_0,y_0,s_0) = (0,0,1)$,
providing a low-resolution but high-spatial-coverage view of each scene at least once over the course of pretraining.
Training with 1 F-IID + 1 R-IID rather than 2 R-IID rollouts accelerates convergence,
even when evaluating on held-out data with a R-IID policy (Table~\ref{tab:ablations}\,\textbf{h}).

\textbf{Sampling of viewpoint center and scale.}
For all non-initial timesteps, as well as the $t = 0$ timestep of R-IID rollouts,
we sample $A \sim \mathcal{U}([0, A_{\max}])$ and set $s = 1 - \sqrt{A}$.
For a glimpse of scale $s$,
valid centers form a box of half-side-length $\sqrt{A}$,
within which we draw $(x, y)$ uniformly.
The resulting marginal scale density is $p(s) \propto (1 - s)$,
favoring small, localized glimpses.
We set $A_{\max} = (1 - s_{\min})^2$ with a minimum glimpse scale of $s_{\min} = 0.05$, or $0.25\%$ of the scene's area.

\textbf{Rollout length randomization.}
Our loss provides \emph{temporally} dense supervision, with per-timestep credit assignment.
This allows us to train CanViT with truncated \ac{BPTT} over chunks of only $K = 2$ glimpses.
At each chunk boundary, we stop the rollout with probability $p_{\text{stop}} = 0.5$,
resulting in a geometric distribution of chunk counts
with an \emph{average} sequence length of $T = K / p_{\text{stop}} = 4$ glimpses
while occasionally exposing the model to longer sequences.
This scheme ensures sequence length robustness on a constant train-time VRAM footprint,
with a modest train-time compute overhead due to the low average sequence length.

\section{Experiments}
\label{sec:experiments}

To assess generalization across tasks, policies, temporal horizons, and canvas resolutions,
we pretrain a general-purpose CanViT-B checkpoint and evaluate it with linear decoding
on ADE20K~\cite{zhouSceneParsingADE20K2017} semantic segmentation
and ImageNet-1k (IN1k) classification~\cite{russakovskyImageNetLargeScale2015}.
We use frozen weights to assess feature decodability on ADE20K and IN1k,
and full fine-tuning to measure peak IN1k performance;
we do not fine-tune on ADE20K due to its low scene count.
We use $128^2\,\text{px}$ glimpses consistently.
We provide additional details on pretraining (Appendix~\ref{supp:pretraining-details}) and evaluation (Appendix~\ref{supp:evaluation-details}), as well as a detailed ablation study (Appendix~\ref{supp:ablation-study}) and inference latency measurements (Appendix~\ref{supp:latency-experiments}).

\textbf{Task-agnostic pretraining.}
Following the protocol described in \S\ref{sec:pretraining},
we pretrain CanViT-B from a random initialization
in just 166 hours on a single H100,
sampling approximately 1 billion glimpses
from 13.2 million $512^2\,\text{px}$ ImageNet-21k~\cite{russakovskyImageNetLargeScale2015,ridnikImageNet21KPretrainingMasses2021} scenes.
We use an average sequence length of $T = 4$ (via $p_{\mathrm{stop}} = 0.5$ and $K = 2$) and a canvas resolution of $32^2 = 1024$ patches during pretraining.

\textbf{Task-specific probing.}
For ADE20K, we train linear probes to predict segmentation masks from canvas patches.
We train a separate probe for each considered canvas resolution.
For ImageNet-1k, we use a linear probe to predict logits from CanViT's recurrent CLS token.
When fine-tuning, we start from our frozen-weights probes and unfreeze CanViT's own weights (i.e.\ LP-FT~\cite{kumarFineTuningCanDistort2022}).

\textbf{Baselines.}
We consider leading active vision models with published results on ADE20K segmentation and/or ImageNet-1k classification:
Saccader~\cite{elsayedSaccaderImprovingAccuracy2019},
AME~\cite{AMEPardylRK0T23},
AdaGlimpse~\cite{pardylAdaGlimpseActiveVisual2025},
and AdaptiveNN~\cite{wangEmulatingHumanlikeAdaptive2025}.
We report each model's best published results;
for AME, we include both its SETR~\cite{zhengRethinkingSemanticSegmentation2021} and MAE~\cite{heMaskedAutoencodersAre2022}-initialized variants for completeness.
When assessing the effects of canvas resolution and ground-truth mask area, we also compare against the DINOv3 ViT-B passive teacher.

\textbf{Policies.}
To assess CanViT's ability to generalize across policies,
we supplement the R-IID and F-IID train-time policies described in \S\ref{subsec:policy-agnosticism}
with additional inference-time-only policies.
We introduce a \textbf{Coarse-to-Fine (C2F)} policy,
which traverses a quadtree over the scene,
deterministically decreasing the viewpoint scale as the rollout progresses
while randomizing the visitation order at each scale.
To isolate the effect of processing order,
we pair C2F with the \textbf{Fine-to-Coarse (F2C)} policy,
which reverses C2F viewpoint sequences.
For ADE20K, we also introduce an \textbf{Entropy-Guided C2F (EG-C2F)} variant,
which greedily selects the highest-uncertainty tile among those that have not yet been visited at a given scale,
using the segmentation probe's per-position class entropy.
This image-dependent dynamic policy showcases the ability of the canvas to guide viewpoint selection, even without \ac{RL}.
Lastly, to disentangle the impact of additional recurrent processing from that of ingesting different inputs,
we consider a \textbf{Repeated Full-Scene (RFS)} policy,
which simply iterates over the $(x,y,s)=(0,0,1)$ zoomed-out viewpoint.
We use up to $T = 21$ glimpses.

\subsection{Results}
\label{subsec:results}

\begin{figure}[t]
  \centering
  \includegraphics[width=\linewidth]{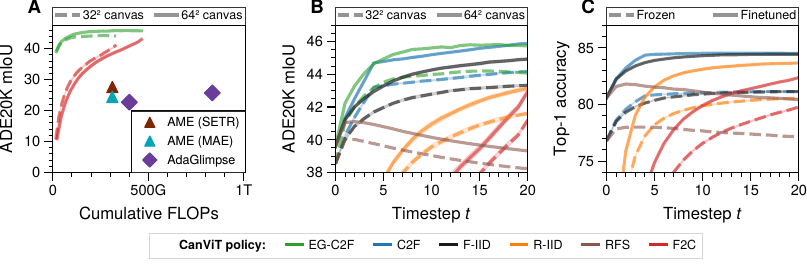}
  \caption{\textbf{Benchmark results on ADE20K and ImageNet-1k}.
    \textbf{A.}~Accuracy--efficiency frontier on ADE20K segmentation.
    \textbf{B.}~ADE20K segmentation mIoU by viewing policy, at $32^2$ or $64^2$ canvas resolution.
    \textbf{C.}~ImageNet-1k classification accuracy by viewing policy, with or without fine-tuning.}
  \label{fig:main-ade20k-in1k-results}
\end{figure}

\begin{wraptable}{r}{0.48\linewidth}
  \vspace{-1.2em}
  \makeatletter
  \setlength{\abovecaptionskip}{\@neuripsbelowcaptionskip}%
  \setlength{\belowcaptionskip}{\@neuripsabovecaptionskip}%
  \makeatother
  \caption{\textbf{Comparison with prior work.} \\ $\dagger$: Frozen weights with linear probing.}
  \label{tab:active-vision-comparison}
  \centering
  \small
  \setlength{\tabcolsep}{4pt}%
  \begin{tabular}{lcc}
    \toprule
    \textbf{Model} & \textbf{ADE20K mIoU} & \textbf{IN1k top-1} \\
    \midrule
    Saccader \cite{elsayedSaccaderImprovingAccuracy2019} & --- & 75.0 \\
    AME \cite{AMEPardylRK0T23} (MAE) & 24.4 & --- \\
    AME \cite{AMEPardylRK0T23} (SETR) & 27.6 & --- \\
    AdaGlimpse \cite{pardylAdaGlimpseActiveVisual2025} & 25.7 & 77.5 \\
    AdaptiveNN \cite{wangEmulatingHumanlikeAdaptive2025} & --- & 82.2 \\
    \midrule
    \textbf{CanViT-B} & \textbf{\adeBestMiou}$^{\dagger}$ & \inkFrozenBest$^{\dagger}$ / \textbf{\inkFinetunedBest} \\
    \bottomrule
  \end{tabular}
  \vspace{-0.5em}
\end{wraptable}

CanViT-B outperforms prior active vision models
on both ADE20K segmentation and ImageNet-1k classification by a wide margin (Table~\ref{tab:active-vision-comparison}),
achieving up to \adeBestMiou\% ADE20K mIoU (up from 27.6\%)
and \inkFinetunedBest\% IN1k top-1 accuracy (up from 82.2\%).
On ADE20K, state-of-the-art accuracy is paired with striking efficiency:
in a single glimpse, CanViT-B reaches \adeCheapestBeatMiou\% mIoU
using \adeCheapestBeatFlopRatio{}x fewer FLOPs than AME does at \adeBestPriorMiou\% mIoU
(Figure~\ref{fig:main-ade20k-in1k-results}\textbf{A}).
CanViT's advantage holds even with the F2C policy's worse-than-random viewing order,
showing that perception, not viewpoint selection, was the bottleneck in this task.

On both tasks, CanViT-B generalizes across policies, temporal horizons, and canvas resolutions, even with frozen weights (Figure~\ref{fig:main-ade20k-in1k-results}\textbf{B,C}).
The inference-time-only C2F policy
dramatically outperforms the R-IID and F-IID policies on ADE20K
in both peak accuracy and efficiency,
and provides a modest efficiency boost early into IN1k rollouts.
EG-C2F's uncertainty-based viewpoint selection
enables further efficiency gains on ADE20K.
We observe consistent ADE20K mIoU gains through $T = \adeMaxT$ glimpses,
well beyond the average of $T \approx 4$ used during pretraining,
while IN1k accuracy quickly plateaus.
On both tasks, CanViT reaches higher peak accuracies with C2F than F2C
despite both policies yielding the same set of viewpoints by $t = \adeLastT$,
highlighting the importance of contextually informed processing.
Given a constant input (RFS),
accuracy initially improves purely from recurrent processing,
then subsequently declines in the absence of diverse viewpoints.
Despite training exclusively with a $32^2$ canvas,
evaluating CanViT-B with an inference-time $64^2$ canvas
consistently provides an accuracy boost on ADE20K,
an effect that we probe further in Figure~\ref{fig:resolution-and-mask-size-analysis}.

\begin{figure}[t]
  \centering
  \includegraphics[width=\linewidth]{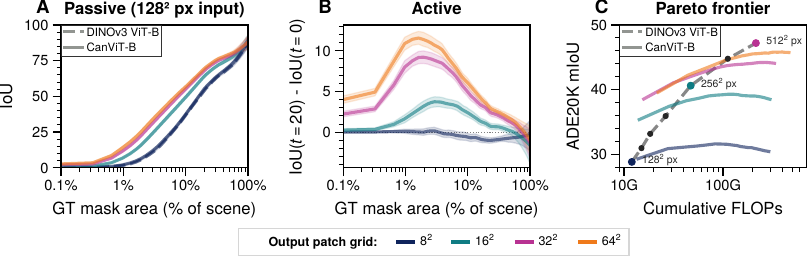}
  \caption{\textbf{Effects of canvas resolution and ground-truth mask area.}
    We evaluate on ADE20K segmentation with canvas resolution in $\{8^2, 16^2, 32^2, 64^2\}$,
    using the EG-C2F policy, frozen weights, and per-resolution linear probes.
    \textbf{A.}~Per-mask IoU given a single full-scene $128^2$\,px input ($8^2$ input patch grid).
    \textbf{B.}~Per-mask IoU gain given additional glimpses.
    \textbf{C.}~Accuracy--efficiency Pareto frontier, varying FLOPs via timestep count for CanViT and via input resolution (px) for DINOv3.
  }
  \label{fig:resolution-and-mask-size-analysis}
\end{figure}

CanViT's dual-stream design decouples glimpse (instantaneous vision) resolution from canvas (memory/output) resolution.
This input-output dissociation can be advantageous even in single-timestep (passive vision) contexts:
given a low-resolution full-scene glimpse, a frozen CanViT-B with a finer canvas outperforms its coarser-canvas counterpart on ADE20K segmentation,
and even its input-matched DINOv3-ViT-B teacher (Figure~\ref{fig:resolution-and-mask-size-analysis}\textbf{A}).
Finer canvases are most beneficial when dealing with small ground-truth segmentation masks (e.g., for small, distant or occluded objects) (Figure~\ref{fig:resolution-and-mask-size-analysis}\textbf{A})
or using multiple timesteps (Figure~\ref{fig:resolution-and-mask-size-analysis}\textbf{B}).
Together,
the accuracy gains provided by finer canvases
and the low computational overhead of Canvas Attention
allow CanViT-B to offer an attractive accuracy-efficiency frontier on ADE20K segmentation at low FLOP budgets (Figure~\ref{fig:resolution-and-mask-size-analysis}\textbf{C}).

\section{Conclusion}
\label{sec:conclusion}

CanViT applies the foundation model playbook to active vision,
with a general-purpose active-vision model
that can be used as-is across tasks and viewing policies
while delivering outstanding performance.
Our results show that pairing a carefully designed active-vision architecture
with a highly informative learning signal
is sufficient to dramatically advance the state of the art in active computer vision,
without relying on complex \ac{RL} pipelines.
Together, CanViT's strong inference-time generalization, high computational efficiency
and state-of-the-art performance on active-vision benchmarks
highlight the potential of our approach
and of task- and policy-agnostic \ac{AVFM} pretraining.

\textbf{Limitations and future work.}
Like most active-vision models, CanViT was trained and evaluated on static scenes.
However, its constant-memory recurrent design and high computational efficiency
make it an ideal candidate for future adaptation to real-time video processing and embodied active perception.
In those contexts, the addition of a lightweight gating mechanism may be desirable in order to facilitate forgetting.
Similarly, \ac{RL}-based policy learning would be a natural extension,
particularly in goal-directed settings such as visual search;
we leave this to future work.
We expect CanViT-B to be directly applicable to monocular depth estimation without retraining,
similarly to its DINOv3 teacher and passive models distilled from DINOv2/v3 features~\cite{zhang2025accessing}.
Our passive-to-active distillation scheme accelerates active-vision pretraining
by allowing it to remain focused on active-vision-specific considerations
such as partial observability and information integration across glimpses,
but requires a pretrained passive teacher.
This may prove limiting at very high scene resolutions
due to the quadratic cost of teacher self-attention
and the large storage footprint of precomputed features,
especially when working with video, which would involve distinct per-timestep targets.
We thus see active-to-active self-distillation as a promising avenue for future research.
Finally, our results use a single model size
pretrained using a modest compute budget
and a dataset over 100$\times$ smaller than DINOv3's LVD-1689M;
we expect further gains from more extensive pretraining and larger models,
though the latter may be undesirable for edge deployment.

\begin{ack}
Funded by a Vision Sciences Research Network (VSRN) PhD Recruitment Scholarship, a Unifying Neuroscience and Artificial Intelligence - Québec (UNIQUE) MSc Excellence Fellowship, a Centre for Applied Mathematics in Bioscience and Medicine (CAMBAM) Fellowship, a McGill Department of Physiology Max and Jane Childress Entrance Fellowship, and a McGill Faculty of Medicine and Health Sciences Internal Studentship to Yohaï-Eliel Berreby; a Natural Sciences and Engineering Research Council of Canada (NSERC) Undergraduate Student Research Award administered by the McGill Faculty of Medicine and Health Sciences to Sabrina Du; a Canada CIFAR AI Chair to Audrey Durand; and an NSERC Discovery Research program grant (RGPIN-2022-05399) and Supplement (DGECR-2022-00321) together with a computing resources grant from Calcul Québec and the Digital Research Alliance of Canada to B. Suresh Krishna. This project received compute support from Lamarck Labs. The funders of this research played no role in any aspect of the research, decision to publish, or manuscript preparation.

\end{ack}

\bibliographystyle{unsrtnat}
\bibliography{references}

@inproceedings{heMaskedAutoencodersAre2022,
  title = {{Masked {{Autoencoders Are Scalable Vision Learners}}}},
  booktitle = {Proceedings of the {{IEEE}}/{{CVF Conference}} on {{Computer Vision}} and {{Pattern Recognition}}},
  author = {He, Kaiming and Chen, Xinlei and Xie, Saining and Li, Yanghao and Doll{\'a}r, Piotr and Girshick, Ross},
  year = 2022,
  pages = {16000--16009},
}

@inproceedings{dosovitskiyImageWorth16x162020,
  title = {{An {{Image}} Is {{Worth}} 16x16 {{Words}}: {{Transformers}} for {{Image Recognition}} at {{Scale}}}},
  shorttitle = {An {{Image}} Is {{Worth}} 16x16 {{Words}}},
  booktitle = {International {{Conference}} on {{Learning Representations}}},
  author = {Dosovitskiy, Alexey and Beyer, Lucas and Kolesnikov, Alexander and Weissenborn, Dirk and Zhai, Xiaohua and Unterthiner, Thomas and Dehghani, Mostafa and Minderer, Matthias and Heigold, Georg and Gelly, Sylvain and Uszkoreit, Jakob and Houlsby, Neil},
  year = 2020,
  urldate = {2026-01-30},
  langid = {english},
}

@inproceedings{heoRotaryPositionEmbedding2025,
  title = {{Rotary {{Position Embedding}} for {{Vision Transformer}}}},
  booktitle = {Computer {{Vision}} -- {{ECCV}} 2024},
  author = {Heo, Byeongho and Park, Song and Han, Dongyoon and Yun, Sangdoo},
  year = 2025,
  pages = {289--305},
  publisher = {Springer Nature Switzerland},
  address = {Cham},
  doi = {10.1007/978-3-031-72684-2_17},
  langid = {english},
}

@inproceedings{darcetVisionTransformersNeed2023,
  title = {{Vision {{Transformers Need Registers}}}},
  booktitle = {International {{Conference}} on {{Learning Representations}}},
  author = {Darcet, Timoth{\'e}e and Oquab, Maxime and Mairal, Julien and Bojanowski, Piotr},
  year = 2023,
  urldate = {2026-01-30},
  langid = {english},
}

@article{simeoniDINOv32026,
  title = {{{DINOv3}}},
  author = {Sim{\'e}oni, Oriane and Vo, Huy V. and Seitzer, Maximilian and Baldassarre, Federico and Oquab, Maxime and Jose, Cijo and Khalidov, Vasil and Szafraniec, Marc and Yi, Seung Eun and Ramamonjisoa, Michael and Massa, Francisco and Haziza, Daniel and Wehrstedt, Luca and Wang, Jianyuan and Darcet, Timoth{\'e}e and Moutakanni, Th{\'e}o and Sentana, Leonel and Roberts, Claire and Vedaldi, Andrea and Tolan, Jamie and Brandt, John and Couprie, Camille and Mairal, Julien and Jegou, Herve and Labatut, Patrick and Bojanowski, Piotr},
  year = 2026,
  month = feb,
  journal = {Transactions on Machine Learning Research},
  issn = {2835-8856},
  urldate = {2026-05-16},
  langid = {english},
}

@inproceedings{mnihRecurrentModelsVisual2014,
  title = {{Recurrent {{Models}} of {{Visual Attention}}}},
  booktitle = {Advances in {{Neural Information Processing Systems}}},
  author = {Mnih, Volodymyr and Heess, Nicolas and Graves, Alex and {Kavukcuoglu}, Koray},
  year = 2014,
  volume = {27},
  publisher = {Curran Associates, Inc.},
  urldate = {2024-10-02},
}

@inproceedings{DRAMBaMK14,
  author       = {Jimmy Ba and
                  Volodymyr Mnih and
                  Koray Kavukcuoglu},
  title        = {{Multiple Object Recognition with Visual Attention}},
  booktitle    = {International Conference on Learning Representations},
  year         = {2015},
}

@inproceedings{elsayedSaccaderImprovingAccuracy2019,
  title = {{Saccader: {{Improving Accuracy}} of {{Hard Attention Models}} for {{Vision}}}},
  shorttitle = {Saccader},
  booktitle = {Advances in {{Neural Information Processing Systems}}},
  author = {Elsayed, Gamaleldin and Kornblith, Simon and Le, Quoc V},
  year = 2019,
  volume = {32},
  publisher = {Curran Associates, Inc.},
  urldate = {2025-02-16},
}

@inproceedings{AMEPardylRK0T23,
  author       = {Adam Pardyl and
                  Grzegorz Rypesc and
                  Grzegorz Kurzejamski and
                  Bartosz Zielinski and
                  Tomasz Trzcinski},
  title        = {{Active Visual Exploration Based on Attention-Map Entropy}},
  booktitle    = {Proceedings of the International Joint Conference on Artificial Intelligence},
  pages        = {1303--1311},
  publisher    = {ijcai.org},
  year         = {2023},
  doi          = {10.24963/IJCAI.2023/145},
  bibsource    = {dblp computer science bibliography, https://dblp.org}
}

@inproceedings{pardylAdaGlimpseActiveVisual2025,
  title = {{{{AdaGlimpse}}: {{Active Visual Exploration}} with {{Arbitrary Glimpse Position}} and {{Scale}}}},
  shorttitle = {{{AdaGlimpse}}},
  booktitle = {Computer {{Vision}} -- {{ECCV}} 2024},
  author = {Pardyl, Adam and Wronka, Micha{\l} and Wo{\l}czyk, Maciej and Adamczewski, Kamil and Trzci{\'n}ski, Tomasz and Zieli{\'n}ski, Bartosz},
  year = 2025,
  pages = {112--129},
  publisher = {Springer Nature Switzerland},
  address = {Cham},
  doi = {10.1007/978-3-031-72664-4_7},
  langid = {english},
}

@article{wangEmulatingHumanlikeAdaptive2025,
  title = {{Emulating Human-like Adaptive Vision for Efficient and Flexible Machine Visual Perception}},
  author = {Wang, Yulin and Yue, Yang and Yue, Yang and Wang, Huanqian and Jiang, Haojun and Han, Yizeng and Ni, Zanlin and Pu, Yifan and Shi, Minglei and Lu, Rui and Yang, Qisen and Zhao, Andrew and Xia, Zhuofan and Song, Shiji and Huang, Gao},
  year = 2025,
  journal = {Nature Machine Intelligence},
  volume = {7},
  number = {11},
  pages = {1804--1822},
  publisher = {Nature Publishing Group},
  doi = {10.1038/s42256-025-01130-7},
  urldate = {2026-01-30},
  copyright = {2025 The Author(s), under exclusive licence to Springer Nature Limited},
  langid = {english}
}

@inproceedings{zhouSceneParsingADE20K2017,
  title = {{Scene {{Parsing Through ADE20K Dataset}}}},
  booktitle = {Proceedings of the {{IEEE Conference}} on {{Computer Vision}} and {{Pattern Recognition}}},
  author = {Zhou, Bolei and Zhao, Hang and Puig, Xavier and Fidler, Sanja and Barriuso, Adela and Torralba, Antonio},
  year = 2017,
  pages = {633--641},
  urldate = {2026-01-30},
}

@article{yaminsUsingGoaldrivenDeep2016,
  title = {{Using Goal-Driven Deep Learning Models to Understand Sensory Cortex}},
  author = {Yamins, Daniel L. K. and DiCarlo, James J.},
  year = 2016,
  journal = {Nature Neuroscience},
  volume = {19},
  number = {3},
  pages = {356--365},
  publisher = {Nature Publishing Group},
  doi = {10.1038/nn.4244},
  urldate = {2025-02-20},
  copyright = {2016 Springer Nature America, Inc.},
  langid = {english},
}

@misc{schrimpfBrainScoreWhichArtificial2018,
  title = {{Brain-{{Score}}: {{Which Artificial Neural Network}} for {{Object Recognition}} Is Most {{Brain-Like}}?}},
  shorttitle = {Brain-{{Score}}},
  author = {Schrimpf, Martin and Kubilius, Jonas and Hong, Ha and Majaj, Najib J. and Rajalingham, Rishi and Issa, Elias B. and Kar, Kohitij and Bashivan, Pouya and {Prescott-Roy}, Jonathan and Schmidt, Kailyn and Yamins, Daniel L. K. and DiCarlo, James J.},
  year = 2018,
  primaryclass = {New Results},
  publisher = {bioRxiv},
  doi = {10.1101/407007},
  urldate = {2023-10-26},
  archiveprefix = {bioRxiv},
  chapter = {New Results},
  copyright = {\copyright{} 2018, Posted by Cold Spring Harbor Laboratory. This pre-print is available under a Creative Commons License (Attribution 4.0 International), CC BY 4.0, as described at http://creativecommons.org/licenses/by/4.0/},
  langid = {english},
  note = {bioRxiv preprint, doi:10.1101/407007},
}

@article{zhuangUnsupervisedNeuralNetwork2021,
  title = {{Unsupervised Neural Network Models of the Ventral Visual Stream}},
  author = {Zhuang, Chengxu and Yan, Siming and Nayebi, Aran and Schrimpf, Martin and Frank, Michael C. and DiCarlo, James J. and Yamins, Daniel L. K.},
  year = 2021,
  journal = {Proceedings of the National Academy of Sciences},
  volume = {118},
  number = {3},
  pages = {e2014196118},
  publisher = {Proceedings of the National Academy of Sciences},
  doi = {10.1073/pnas.2014196118},
  urldate = {2026-02-05},
}

@inproceedings{bakhtiariFunctionalSpecializationVisual2021,
  title = {{The Functional Specialization of Visual Cortex Emerges from Training Parallel Pathways with Self-Supervised Predictive Learning}},
  booktitle = {Advances in {{Neural Information Processing Systems}}},
  author = {Bakhtiari, Shahab and Mineault, Patrick and Lillicrap, Timothy and Pack, Christopher and Richards, Blake},
  year = 2021,
  volume = {34},
  pages = {25164--25178},
  publisher = {Curran Associates, Inc.},
  urldate = {2023-06-06},
}

@inproceedings{raugelDisentanglingFactorsConvergence2025,
  title = {Disentangling the {{Factors}} of {{Convergence}} between {{Brains}} and {{DINOv3}}},
  booktitle = {The {{Fourteenth International Conference}} on {{Learning Representations}}},
  author = {Raugel, Jos{\'e}phine and Szafraniec, Marc and Vo, Huy V. and Couprie, Camille and Rapin, J{\'e}r{\'e}my and {d'Ascoli}, St{\'e}phane and Labatut, Patrick and Bojanowski, Piotr and Wyart, Valentin and King, Jean-Remi},
  year = 2025,
  month = oct,
  urldate = {2026-05-16},
  langid = {english},
}

@article{mehrerEcologicallyMotivatedImage2021,
  title = {{An Ecologically Motivated Image Dataset for Deep Learning Yields Better Models of Human Vision}},
  author = {Mehrer, Johannes and Spoerer, Courtney J. and Jones, Emer C. and Kriegeskorte, Nikolaus and Kietzmann, Tim C.},
  year = 2021,
  journal = {Proceedings of the National Academy of Sciences},
  volume = {118},
  number = {8},
  pages = {e2011417118},
  publisher = {Proceedings of the National Academy of Sciences},
  doi = {10.1073/pnas.2011417118},
  urldate = {2024-11-13},
}

@article{gilbertTopdownInfluencesVisual2013,
  title = {{Top-down Influences on Visual Processing}},
  author = {Gilbert, Charles D. and Li, Wu},
  year = 2013,
  journal = {Nature Reviews Neuroscience},
  volume = {14},
  number = {5},
  pages = {350--363},
  publisher = {Nature Publishing Group},
  doi = {10.1038/nrn3476},
  urldate = {2026-02-05},
  copyright = {2013 Springer Nature Limited},
  langid = {english},
}

@article{karEvidenceThatRecurrent2019,
  title = {{Evidence That Recurrent Circuits Are Critical to the Ventral Stream's Execution of Core Object Recognition Behavior}},
  author = {Kar, Kohitij and Kubilius, Jonas and Schmidt, Kailyn and Issa, Elias B. and DiCarlo, James J.},
  year = 2019,
  journal = {Nature Neuroscience},
  volume = {22},
  number = {6},
  pages = {974--983},
  publisher = {Nature Publishing Group},
  doi = {10.1038/s41593-019-0392-5},
  urldate = {2026-02-05},
  copyright = {2019 The Author(s), under exclusive licence to Springer Nature America, Inc.},
  langid = {english},
}

@article{hoppeMultistepPlanningEye2019,
  title = {{Multi-Step Planning of Eye Movements in Visual Search}},
  author = {Hoppe, David and Rothkopf, Constantin A.},
  year = 2019,
  journal = {Scientific Reports},
  volume = {9},
  number = {1},
  pages = {144},
  publisher = {Nature Publishing Group},
  doi = {10.1038/s41598-018-37536-0},
  urldate = {2025-08-26},
  copyright = {2019 The Author(s)},
  langid = {english},
}

@article{yaminsPerformanceoptimizedHierarchicalModels2014,
  title = {{Performance-Optimized Hierarchical Models Predict Neural Responses in Higher Visual Cortex}},
  author = {Yamins, Daniel L. K. and Hong, Ha and Cadieu, Charles F. and Solomon, Ethan A. and Seibert, Darren and DiCarlo, James J.},
  year = 2014,
  journal = {Proceedings of the National Academy of Sciences},
  volume = {111},
  number = {23},
  pages = {8619--8624},
  publisher = {Proceedings of the National Academy of Sciences},
  doi = {10.1073/pnas.1403112111},
  urldate = {2023-11-28},
}

@article{raoPredictiveCodingVisual1999,
  title = {{Predictive Coding in the Visual Cortex: A Functional Interpretation of Some Extra-Classical Receptive-Field Effects}},
  shorttitle = {Predictive Coding in the Visual Cortex},
  author = {Rao, Rajesh P. N. and Ballard, Dana H.},
  year = 1999,
  journal = {Nature Neuroscience},
  volume = {2},
  number = {1},
  pages = {79--87},
  publisher = {Nature Publishing Group},
  doi = {10.1038/4580},
  urldate = {2024-09-07},
  copyright = {1999 Nature America Inc.},
  langid = {english},
}

@article{melcherPersistenceVisualMemory2001,
  title = {{Persistence of Visual Memory for Scenes}},
  author = {Melcher, David},
  year = 2001,
  journal = {Nature},
  volume = {412},
  number = {6845},
  pages = {401--401},
  publisher = {Nature Publishing Group},
  doi = {10.1038/35086646},
  urldate = {2026-02-05},
  copyright = {2001 Springer Nature Limited},
  langid = {english},
}

@inproceedings{ablavatski2017enriched,
  title={{Enriched Deep Recurrent Visual Attention Model for Multiple Object Recognition}},
  author={Ablavatski, Artsiom and Lu, Shijian and Cai, Jianfei},
  booktitle={Proceedings of the IEEE Winter Conference on Applications of Computer Vision},
  pages={971--978},
  year={2017},
  organization={IEEE}
}

@inproceedings{wangGlanceFocusDynamic2020,
  title = {{Glance and Focus: A Dynamic Approach to Reducing Spatial Redundancy in Image Classification}},
  shorttitle = {Glance and Focus},
  booktitle = {Advances in {{Neural Information Processing Systems}}},
  author = {Wang, Yulin and Lv, Kangchen and Huang, Rui and Song, Shiji and Yang, Le and Huang, Gao},
  year = {2020},
  volume = {33},
  pages = {2432--2444},
  publisher = {Curran Associates, Inc.},
  urldate = {2025-09-08},
}

@inproceedings{liuImprovedEfficiencyBased2023,
  title = {{Improved {{Efficiency Based}} on {{Learned Saccade}} and {{Continuous Scene Reconstruction From Foveated Visual Sampling}}}},
  booktitle = {International {{Conference}} on {{Learning Representations}}},
  author = {Liu, Jiayang and Bu, Yiming and Tso, Daniel and Qiu, Qinru},
  year = 2023,
  urldate = {2025-02-16},
  langid = {english},
}

@inproceedings{papadopoulosHardAttentionScalableImage2021,
  title = {{Hard-{{Attention}} for {{Scalable Image Classification}}}},
  booktitle = {Advances in {{Neural Information Processing Systems}}},
  author = {Papadopoulos, Athanasios and Korus, Pawel and Memon, Nasir},
  year = 2021,
  volume = {34},
  pages = {14694--14707},
  publisher = {Curran Associates, Inc.},
  urldate = {2026-02-05},
}

@inproceedings{matheReinforcementLearningVisual2016,
  title = {{Reinforcement {{Learning}} for {{Visual Object Detection}}}},
  booktitle = {Proceedings of the {{IEEE Conference}} on {{Computer Vision}} and {{Pattern Recognition}}},
  author = {Mathe, Stefan and Pirinen, Aleksis and Sminchisescu, Cristian},
  year = 2016,
  pages = {2894--2902},
  doi = {10.1109/CVPR.2016.316},
  urldate = {2026-02-05},
}

@misc{pourrahimiNeuralSignaturesAssociational2025,
  title = {{Neural Signatures of Associational Cortex Emerge in a Goal-Directed Model of Visual Search}},
  author = {Pourrahimi, Motahareh and Bashivan, Pouya},
  year = 2025,
  primaryclass = {New Results},
  publisher = {bioRxiv},
  doi = {10.1101/2025.06.06.658387},
  urldate = {2026-04-24},
  archiveprefix = {bioRxiv},
  chapter = {New Results},
  copyright = {\copyright{} 2025, Posted by Cold Spring Harbor Laboratory. This pre-print is available under a Creative Commons License (Attribution-NonCommercial-NoDerivs 4.0 International), CC BY-NC-ND 4.0, as described at http://creativecommons.org/licenses/by-nc-nd/4.0/},
  langid = {english},
  note = {bioRxiv preprint, doi:10.1101/2025.06.06.658387},
}

@article{russakovskyImageNetLargeScale2015,
  title = {{{ImageNet Large Scale Visual Recognition Challenge}}},
  author = {Russakovsky, Olga and Deng, Jia and Su, Hao and Krause, Jonathan and Satheesh, Sanjeev and Ma, Sean and Huang, Zhiheng and Karpathy, Andrej and Khosla, Aditya and Bernstein, Michael and Berg, Alexander C. and {Fei-Fei}, Li},
  year = 2015,
  journal = {International Journal of Computer Vision},
  volume = {115},
  number = {3},
  pages = {211--252},
  doi = {10.1007/s11263-015-0816-y},
  urldate = {2026-02-05},
  langid = {english},
}

@inproceedings{zhengRethinkingSemanticSegmentation2021,
  title = {{Rethinking {{Semantic Segmentation From}} a {{Sequence-to-Sequence Perspective With Transformers}}}},
  booktitle = {Proceedings of the {{IEEE}}/{{CVF Conference}} on {{Computer Vision}} and {{Pattern Recognition}}},
  author = {Zheng, Sixiao and Lu, Jiachen and Zhao, Hengshuang and Zhu, Xiatian and Luo, Zekun and Wang, Yabiao and Fu, Yanwei and Feng, Jianfeng and Xiang, Tao and Torr, Philip H. S. and Zhang, Li},
  year = 2021,
  pages = {6881--6890},
  urldate = {2026-02-05},
  langid = {english},
}

@inproceedings{seifiGlimpseAttendandExploreSelfAttentionActive2021,
  title = {{Glimpse-{{Attend-and-Explore}}: {{Self-Attention}} for {{Active Visual Exploration}}}},
  shorttitle = {Glimpse-{{Attend-and-Explore}}},
  booktitle = {Proceedings of the {{IEEE}}/{{CVF International Conference}} on {{Computer Vision}}},
  author = {Seifi, Soroush and Jha, Abhishek and Tuytelaars, Tinne},
  year = 2021,
  pages = {16137--16146},
  urldate = {2026-02-06},
  langid = {english},
}

@inproceedings{olszewskiTORETokenRecycling2025,
  title = {{{{TORE}}: {{Token Recycling}} in {{Vision Transformers}} for {{Efficient Active Visual Exploration}}}},
  shorttitle = {{{TORE}}},
  booktitle = {Proceedings of the {{IEEE}}/{{CVF Winter Conference}} on {{Applications}} of {{Computer Vision}}},
  author = {Olszewski, Jan and Rymarczyk, Dawid and W{\'o}jcik, Piotr and Pach, Mateusz and Zieli{\'n}ski, Bartosz},
  year = 2025,
  pages = {8606--8616},
  doi = {10.1109/WACV61041.2025.00834},
  urldate = {2026-02-06},
}

@inproceedings{touvronGoingDeeperImage2021,
  title = {{Going {{Deeper With Image Transformers}}}},
  booktitle = {Proceedings of the {{IEEE}}/{{CVF International Conference}} on {{Computer Vision}}},
  author = {Touvron, Hugo and Cord, Matthieu and Sablayrolles, Alexandre and Synnaeve, Gabriel and J{\'e}gou, Herv{\'e}},
  year = 2021,
  pages = {32--42},
  urldate = {2026-01-30},
  langid = {english},
}

@inproceedings{tancikFourierFeaturesLet2020,
  title = {{Fourier {{Features Let Networks Learn High Frequency Functions}} in {{Low Dimensional Domains}}}},
  booktitle = {Advances in {{Neural Information Processing Systems}}},
  author = {Tancik, Matthew and Srinivasan, Pratul and Mildenhall, Ben and {Fridovich-Keil}, Sara and Raghavan, Nithin and Singhal, Utkarsh and Ramamoorthi, Ravi and Barron, Jonathan and Ng, Ren},
  year = {2020},
  volume = {33},
  pages = {7537--7547},
  publisher = {Curran Associates, Inc.},
  urldate = {2025-08-25},
}

@inproceedings{saunshiReasoningLatentThoughts2024,
  title = {{Reasoning with {{Latent Thoughts}}: {{On}} the {{Power}} of {{Looped Transformers}}}},
  shorttitle = {Reasoning with {{Latent Thoughts}}},
  booktitle = {International {{Conference}} on {{Learning Representations}}},
  author = {Saunshi, Nikunj and Dikkala, Nishanth and Li, Zhiyuan and Kumar, Sanjiv and Reddi, Sashank J.},
  year = 2024,
  urldate = {2025-08-26},
  langid = {english},
}

@misc{baLayerNormalization2016,
  title = {{Layer {{Normalization}}}},
  author = {Ba, Jimmy Lei and Kiros, Jamie Ryan and Hinton, Geoffrey E.},
  year = 2016,
  number = {arXiv:1607.06450},
  eprint = {1607.06450},
  primaryclass = {stat},
  publisher = {arXiv},
  note = {arXiv preprint arXiv:1607.06450},
  doi = {10.48550/arXiv.1607.06450},
  urldate = {2025-12-20},
  archiveprefix = {arXiv},
}

@inproceedings{
zhang2025accessing,
title={{Accessing Vision Foundation Models via ImageNet-1K}},
author={Yitian Zhang and Xu Ma and Yue Bai and Huan Wang and Yun Fu},
booktitle={International Conference on Learning Representations},
year={2025},
url={https://openreview.net/forum?id=LC6ZtQV6u2}
}

@article{
oquab2024dinov,
title={{{DINO}v2: Learning Robust Visual Features without Supervision}},
author={Maxime Oquab and Timoth{\'e}e Darcet and Th{\'e}o Moutakanni and Huy V. Vo and Marc Szafraniec and Vasil Khalidov and Pierre Fernandez and Daniel Haziza and Francisco Massa and Alaaeldin El-Nouby and Mido Assran and Nicolas Ballas and Wojciech Galuba and Russell Howes and Po-Yao Huang and Shang-Wen Li and Ishan Misra and Michael Rabbat and Vasu Sharma and Gabriel Synnaeve and Hu Xu and Herve Jegou and Julien Mairal and Patrick Labatut and Armand Joulin and Piotr Bojanowski},
journal={Transactions on Machine Learning Research},
year={2024},
url={https://openreview.net/forum?id=a68SUt6zFt},
note={Featured Certification}
}

@inproceedings{ridnikImageNet21KPretrainingMasses2021,
  title = {{{{ImageNet-21K Pretraining}} for the {{Masses}}}},
  booktitle = {Proceedings of the {{Neural Information Processing Systems Track}} on {{Datasets}} and {{Benchmarks}}},
  author = {Ridnik, Tal and {Ben-Baruch}, Emanuel and Noy, Asaf and {Zelnik-Manor}, Lihi},
  year = 2021,
  volume = {1},
  urldate = {2026-02-19},
  archiveprefix = {arXiv},
}

@misc{wangHierarchicalReasoningModel2025,
  title = {{Hierarchical {{Reasoning Model}}}},
  author = {Wang, Guan and Li, Jin and Sun, Yuhao and Chen, Xing and Liu, Changling and Wu, Yue and Lu, Meng and Song, Sen and Yadkori, Yasin Abbasi},
  year = 2025,
  number = {arXiv:2506.21734},
  eprint = {2506.21734},
  primaryclass = {cs},
  publisher = {arXiv},
  doi = {10.48550/arXiv.2506.21734},
  urldate = {2026-02-21},
  archiveprefix = {arXiv},
  note = {arXiv preprint arXiv:2506.21734},
}

@inproceedings{yangLoopedTransformersAre2023,
  title = {{Looped {{Transformers}} Are {{Better}} at {{Learning Learning Algorithms}}}},
  booktitle = {International {{Conference}} on {{Learning Representations}}},
  author = {Yang, Liu and Lee, Kangwook and Nowak, Robert D. and Papailiopoulos, Dimitris},
  year = 2023,
  urldate = {2026-01-23},
  langid = {english},
}

@inproceedings{jaeglePerceiverGeneralPerception2021,
  title = {{Perceiver: {{General Perception}} with {{Iterative Attention}}}},
  shorttitle = {Perceiver},
  booktitle = {Proceedings of the {{International Conference}} on {{Machine Learning}}},
  author = {Jaegle, Andrew and Gimeno, Felix and Brock, Andy and Vinyals, Oriol and Zisserman, Andrew and Carreira, Joao},
  year = 2021,
  pages = {4651--4664},
  publisher = {PMLR},
  urldate = {2026-02-27},
  langid = {english},
}

@inproceedings{jaeglePerceiverIOGeneral2021,
  title = {{Perceiver {{IO}}: {{A General Architecture}} for {{Structured Inputs}} \& {{Outputs}}}},
  shorttitle = {Perceiver {{IO}}},
  booktitle = {International {{Conference}} on {{Learning Representations}}},
  author = {Jaegle, Andrew and Borgeaud, Sebastian and Alayrac, Jean-Baptiste and Doersch, Carl and Ionescu, Catalin and Ding, David and Koppula, Skanda and Zoran, Daniel and Brock, Andrew and Shelhamer, Evan and Henaff, Olivier J. and Botvinick, Matthew and Zisserman, Andrew and Vinyals, Oriol and Carreira, Joao},
  year = 2021,
  urldate = {2026-02-27},
  langid = {english},
}

@inproceedings{jabriScalableAdaptiveComputation2023,
  title = {{Scalable {{Adaptive Computation}} for {{Iterative Generation}}}},
  booktitle = {Proceedings of the {{International Conference}} on {{Machine Learning}}},
  author = {Jabri, Allan and Fleet, David J. and Chen, Ting},
  year = 2023,
  pages = {14569--14589},
  publisher = {PMLR},
  urldate = {2026-02-27},
  langid = {english},
}

@inproceedings{leeSetTransformerFramework2019,
  title = {{Set {{Transformer}}: {{A Framework}} for {{Attention-based Permutation-Invariant Neural Networks}}}},
  shorttitle = {Set {{Transformer}}},
  booktitle = {Proceedings of the {{International Conference}} on {{Machine Learning}}},
  author = {Lee, Juho and Lee, Yoonho and Kim, Jungtaek and Kosiorek, Adam and Choi, Seungjin and Teh, Yee Whye},
  year = 2019,
  pages = {3744--3753},
  publisher = {PMLR},
  urldate = {2026-02-27},
  langid = {english},
}

@misc{jolicoeur-martineauLessMoreRecursive2025,
  title = {{Less Is {{More}}: {{Recursive Reasoning}} with {{Tiny Networks}}}},
  shorttitle = {Less Is {{More}}},
  author = {{Jolicoeur-Martineau}, Alexia},
  year = 2025,
  number = {arXiv:2510.04871},
  eprint = {2510.04871},
  primaryclass = {cs},
  publisher = {arXiv},
  doi = {10.48550/arXiv.2510.04871},
  urldate = {2026-02-27},
  archiveprefix = {arXiv},
  note = {arXiv preprint arXiv:2510.04871},
}

@misc{gravesAdaptiveComputationTime2017,
  title = {{Adaptive {{Computation Time}} for {{Recurrent Neural Networks}}}},
  author = {Graves, Alex},
  year = 2017,
  number = {arXiv:1603.08983},
  eprint = {1603.08983},
  primaryclass = {cs},
  publisher = {arXiv},
  doi = {10.48550/arXiv.1603.08983},
  urldate = {2026-02-27},
  archiveprefix = {arXiv},
  note = {arXiv preprint arXiv:1603.08983},
}

@inproceedings{baninoPonderNetLearningPonder2021,
  title = {{{{PonderNet}}: {{Learning}} to {{Ponder}}}},
  shorttitle = {{{PonderNet}}},
  booktitle = {{{ICML Workshop}} on {{Automated Machine Learning}} ({{AutoML}})},
  author = {Banino, Andrea and Balaguer, Jan and Blundell, Charles},
  year = 2021,
  urldate = {2026-02-27},
  langid = {english},
}

@inproceedings{haoTrainingLargeLanguage2025,
  title = {{Training {{Large Language Models}} to {{Reason}} in a {{Continuous Latent Space}}}},
  booktitle = {Second {{Conference}} on {{Language Modeling}}},
  author = {Hao, Shibo and Sukhbaatar, Sainbayar and Su, DiJia and Li, Xian and Hu, Zhiting and Weston, Jason E. and Tian, Yuandong},
  year = 2025,
  urldate = {2026-02-27},
  langid = {english},
}

@inproceedings{geipingScalingTestTimeCompute2025,
  title = {Scaling up {{Test-Time Compute}} with {{Latent Reasoning}}: {{A Recurrent Depth Approach}}},
  shorttitle = {Scaling up {{Test-Time Compute}} with {{Latent Reasoning}}},
  booktitle = {The {{Thirty-ninth Annual Conference}} on {{Neural Information Processing Systems}}},
  author = {Geiping, Jonas and McLeish, Sean Michael and Jain, Neel and Kirchenbauer, John and Singh, Siddharth and Bartoldson, Brian R. and Kailkhura, Bhavya and Bhatele, Abhinav and Goldstein, Tom},
  year = 2025,
  month = oct,
  urldate = {2026-05-16},
  langid = {english},
}

@inproceedings{dehghaniUniversalTransformers2018,
  title = {{Universal {{Transformers}}}},
  booktitle = {International {{Conference}} on {{Learning Representations}}},
  author = {Dehghani, Mostafa and Gouws, Stephan and Vinyals, Oriol and Uszkoreit, Jakob and Kaiser, Lukasz},
  year = 2018,
  urldate = {2026-02-27},
  langid = {english},
}

@misc{chungEmpiricalEvaluationGated2014,
  title = {{Empirical {{Evaluation}} of {{Gated Recurrent Neural Networks}} on {{Sequence Modeling}}}},
  author = {Chung, Junyoung and Gulcehre, Caglar and Cho, KyungHyun and Bengio, Yoshua},
  year = 2014,
  number = {arXiv:1412.3555},
  eprint = {1412.3555},
  primaryclass = {cs},
  publisher = {arXiv},
  doi = {10.48550/arXiv.1412.3555},
  urldate = {2026-02-27},
  archiveprefix = {arXiv},
  note = {arXiv preprint arXiv:1412.3555},
}

@article{hochreiterLongShortTermMemory1997,
  title = {{Long {{Short-Term Memory}}}},
  author = {Hochreiter, Sepp and Schmidhuber, J{\"u}rgen},
  year = 1997,
  journal = {Neural Comput.},
  volume = {9},
  number = {8},
  pages = {1735--1780},
  doi = {10.1162/neco.1997.9.8.1735},
  urldate = {2026-02-27}
}

@inproceedings{liModelingHumanEye2023,
  title = {{Modeling {{Human Eye Movements}} with {{Neural Networks}} in a {{Maze-Solving Task}}}},
  booktitle = {Proceedings of the {{Gaze Meets ML}} Workshop},
  author = {Li, Jason and Watters, Nicholas and Sohn, Hansem and Jazayeri, Mehrdad},
  year = 2023,
  pages = {98--112},
  publisher = {PMLR},
  urldate = {2026-02-27},
  langid = {english},
}

@incollection{BADDELEY197447,
  title = {{Working Memory}},
  author = {Baddeley, Alan D. and Hitch, Graham},
  editor = {Bower, Gordon H.},
  year = 1974,
  booktitle = {Psychology of Learning and Motivation},
  series = {Psychology of Learning and Motivation},
  volume = {8},
  pages = {47--89},
  publisher = {Academic Press},
  doi = {10.1016/S0079-7421(08)60452-1}
}

@article{suRoFormerEnhancedTransformer2024,
  title = {{{{RoFormer}}: {{Enhanced}} Transformer with {{Rotary Position Embedding}}}},
  shorttitle = {{{RoFormer}}},
  author = {Su, Jianlin and Ahmed, Murtadha and Lu, Yu and Pan, Shengfeng and Bo, Wen and Liu, Yunfeng},
  year = 2024,
  journal = {Neurocomputing},
  volume = {568},
  pages = {127063},
  doi = {10.1016/j.neucom.2023.127063},
  urldate = {2026-02-27}
}

@book{yarbusEyeMovementsVision1967,
  title = {{Eye {{Movements}} and {{Vision}}}},
  author = {Yarbus, Alfred L.},
  year = 1967,
  publisher = {Springer US},
  address = {Boston, MA},
  doi = {10.1007/978-1-4899-5379-7},
  urldate = {2025-08-26},
  copyright = {http://www.springer.com/tdm},
  langid = {english},
}

@article{tolmanCognitiveMapsRats1948,
  title = {{Cognitive Maps in Rats and Men}},
  author = {Tolman, Edward C.},
  year = 1948,
  journal = {Psychological Review},
  volume = {55},
  number = {4},
  pages = {189--208},
  publisher = {American Psychological Association},
  address = {US},
  doi = {10.1037/h0061626},
}

@inproceedings{anselPyTorch2Faster2024,
  title = {{{{PyTorch}} 2: {{Faster Machine Learning Through Dynamic Python Bytecode Transformation}} and {{Graph Compilation}}}},
  shorttitle = {{{PyTorch}} 2},
  booktitle = {Proceedings of the {{ACM International Conference}} on {{Architectural Support}} for {{Programming Languages}} and {{Operating Systems}}, {{Volume}} 2},
  author = {Ansel, Jason and Yang, Edward and He, Horace and Gimelshein, Natalia and Jain, Animesh and Voznesensky, Michael and Bao, Bin and Bell, Peter and Berard, David and Burovski, Evgeni and Chauhan, Geeta and Chourdia, Anjali and Constable, Will and Desmaison, Alban and DeVito, Zachary and Ellison, Elias and Feng, Will and Gong, Jiong and Gschwind, Michael and Hirsh, Brian and Huang, Sherlock and Kalambarkar, Kshiteej and Kirsch, Laurent and Lazos, Michael and Lezcano, Mario and Liang, Yanbo and Liang, Jason and Lu, Yinghai and Luk, C. K. and Maher, Bert and Pan, Yunjie and Puhrsch, Christian and Reso, Matthias and Saroufim, Mark and Siraichi, Marcos Yukio and Suk, Helen and Zhang, Shunting and Suo, Michael and Tillet, Phil and Zhao, Xu and Wang, Eikan and Zhou, Keren and Zou, Richard and Wang, Xiaodong and Mathews, Ajit and Wen, William and Chanan, Gregory and Wu, Peng and Chintala, Soumith},
  year = 2024,
  series = {{{ASPLOS}} '24},
  pages = {929--947},
  publisher = {Association for Computing Machinery},
  address = {New York, NY, USA},
  doi = {10.1145/3620665.3640366},
  urldate = {2026-03-20},
}

@software{flax2020github,
  author = {Jonathan Heek and Anselm Levskaya and Avital Oliver and Marvin Ritter and Bertrand Rondepierre and Andreas Steiner and Marc van {Z}ee},
  title = {{{F}lax: A Neural Network Library and Ecosystem for {JAX}}},
  url = {http://github.com/google/flax},
  version = {0.12.6},
  year = {2024},
}

@misc{beyerAreWeDone2020,
  title = {{Are We Done with {{ImageNet}}?}},
  author = {Beyer, Lucas and H{\'e}naff, Olivier J. and Kolesnikov, Alexander and Zhai, Xiaohua and van den Oord, A{\"a}ron},
  year = 2020,
  number = {arXiv:2006.07159},
  eprint = {2006.07159},
  primaryclass = {cs},
  publisher = {arXiv},
  doi = {10.48550/arXiv.2006.07159},
  note = {arXiv preprint arXiv:2006.07159},
}

@inproceedings{kumarFineTuningCanDistort2022,
  title = {{Fine-{{Tuning}} Can {{Distort Pretrained Features}} and {{Underperform Out-of-Distribution}}}},
  booktitle = {International {{Conference}} on {{Learning Representations}}},
  author = {Kumar, Ananya and Raghunathan, Aditi and Jones, Robbie Matthew and Ma, Tengyu and Liang, Percy},
  year = 2022,
}

@software{jax2018github,
  author = {James Bradbury and Roy Frostig and Peter Hawkins and Matthew James Johnson and Chris Leary and Dougal Maclaurin and George Necula and Adam Paszke and Jake Vander{P}las and Skye Wanderman-{M}ilne and Qiao Zhang},
  title = {{{JAX}: Composable Transformations of {P}ython+{N}um{P}y Programs}},
  url = {http://github.com/jax-ml/jax},
  version = {0.3.13},
  year = {2018},
}

@inproceedings{akibaOptunaNextgenerationHyperparameter2019,
  title = {{Optuna: {{A Next-generation Hyperparameter Optimization Framework}}}},
  shorttitle = {Optuna},
  booktitle = {Proceedings of the {{ACM SIGKDD International Conference}} on {{Knowledge Discovery}} \& {{Data Mining}}},
  author = {Akiba, Takuya and Sano, Shotaro and Yanase, Toshihiko and Ohta, Takeru and Koyama, Masanori},
  year = 2019,
  series = {{{KDD}} '19},
  pages = {2623--2631},
  publisher = {Association for Computing Machinery},
  address = {New York, NY, USA},
  doi = {10.1145/3292500.3330701},
  urldate = {2026-04-17},
}

@inproceedings{seabold2010,
  author    = {Skipper Seabold and Josef Perktold},
  title     = {{statsmodels: Econometric and Statistical Modeling with {Python}}},
  booktitle = {Proceedings of the Python in Science Conference},
  pages     = {57--61},
  year      = {2010},
}

@article{srivastavaDropoutSimpleWay2014,
  title = {{Dropout: {{A Simple Way}} to {{Prevent Neural Networks}} from {{Overfitting}}}},
  shorttitle = {Dropout},
  author = {Srivastava, Nitish and Hinton, Geoffrey and Krizhevsky, Alex and Sutskever, Ilya and Salakhutdinov, Ruslan},
  year = 2014,
  journal = {Journal of Machine Learning Research},
  volume = {15},
  number = {56},
  pages = {1929--1958},
  urldate = {2026-04-22}
}

@inproceedings{ioffeBatchNormalizationAccelerating2015,
  title = {{Batch {{Normalization}}: {{Accelerating Deep Network Training}} by {{Reducing Internal Covariate Shift}}}},
  shorttitle = {Batch {{Normalization}}},
  booktitle = {Proceedings of the {{International Conference}} on {{Machine Learning}}},
  author = {Ioffe, Sergey and Szegedy, Christian},
  year = 2015,
  pages = {448--456},
  publisher = {PMLR},
  urldate = {2026-04-22},
  langid = {english}
}

@inproceedings{renEndToEndInstanceSegmentation2017,
  title = {{End-{{To-End Instance Segmentation With Recurrent Attention}}}},
  booktitle = {Proceedings of the {{IEEE Conference}} on {{Computer Vision}} and {{Pattern Recognition}}},
  author = {Ren, Mengye and Zemel, Richard S.},
  year = 2017,
  pages = {6656--6664},
  urldate = {2026-04-24},
}

@article{zhouPlaces10Million2017,
  title = {{Places: {{A}} 10 Million Image Database for Scene Recognition}},
  author = {Zhou, Bolei and Lapedriza, Agata and Khosla, Aditya and Oliva, Aude and Torralba, Antonio},
  year = 2017,
  journal = {IEEE Transactions on Pattern Analysis and Machine Intelligence},
  publisher = {IEEE},
  urldate = {2026-05-02},
}

\appendix

\clearpage
\part{}
\section*{\centering \LARGE Appendix}
\mtcsettitle{parttoc}{Contents}
\parttoc
\clearpage

\section{Canvas Attention pseudocode}
\label{supp:canvas-attention-pseudocode}

\begin{lstlisting}[style=canvitpython]
class CanvasAttention(nn.Module):
    def __init__(self, D_q, D_kv):
        self.ln_q  = LayerNorm(D_q)
        self.ln_kv = LayerNorm(D_kv)

    # Common template for Reads and Writes
    def forward(self, x_q, x_kv, rope_q, rope_kv):
        q  = to_multihead(self.q_map(self.ln_q(x_q)))
        kv = self.ln_kv(x_kv)
        k  = to_multihead(self.k_map(kv))
        v  = to_multihead(self.v_map(kv))
        q  = apply_2d_rope(q, rope_q)
        k  = apply_2d_rope(k, rope_kv)
        return self.o_map(from_multihead(sdpa(q, k, v)))


class CanvasAttentionRead(CanvasAttention):  # backbone queries canvas
    def __init__(self, D_bb, D_can):
        super().__init__(D_q=D_bb, D_kv=D_can)
        # backbone-side: fully-connected Query and Output projections
        self.q_map = Linear(D_bb, D_can)
        self.o_map = Linear(D_can, D_bb)
        # canvas-side: no-op for Key and Value
        self.k_map = Identity()
        self.v_map = Identity()


class CanvasAttentionWrite(CanvasAttention):  # canvas queries backbone
    def __init__(self, D_bb, D_can):
        super().__init__(D_q=D_can, D_kv=D_bb)
        # backbone-side: fully-connected Key and Value projections
        self.k_map = Linear(D_bb, D_can)
        self.v_map = Linear(D_bb, D_can)
        # canvas-side: no-op for Query and Output
        self.q_map = Identity()
        self.o_map = Identity()


# cell centers of uniform R x C grid
# in [-1,+1]^2, shape [R*C, 2]
def grid(R, C):
    ys = (arange(R) + 0.5) / R * 2 - 1
    xs = (arange(C) + 0.5) / C * 2 - 1
    return stack(meshgrid(ys, xs), -1).reshape(R * C, 2)


# Scene-Relative 2D Rotary Position Embeddings
# center=(y,x) in [-1,+1]^2, scale in (0,1]: where/how zoomed-out the viewpoint is
rope_bb  = compute_2d_rope(center + scale * grid(H_g, W_g))  # dynamic
rope_can = compute_2d_rope(grid(H_c, W_c))                   # fixed

# CanViT alternates reads and writes across depth:
x_bb  = blk1(blk0(x_bb))
x_bb  = x_bb  + read(x_bb, x_can, rope_bb, rope_can)
x_bb  = blk3(blk2(x_bb))
x_can = x_can + write(x_can, x_bb, rope_can, rope_bb)
x_bb  = blk5(blk4(x_bb))
x_bb  = x_bb  + read(x_bb, x_can, rope_bb, rope_can)
# ...
\end{lstlisting}
\clearpage

\section{Viewpoint Encoding (VPE)}
\label{supp:vpe}

The VPE token is instantiated by first encoding the current viewpoint $(x, y, s)$ as the triplet $(x/s, y/s, \log s)$, a parameterization with scale invariance, same-scale translation invariance, and planar isotropy properties (proven below). We then lift this triplet into $\mathbb{R}^{D_{\mathrm{bb}}}$ via \ac{RFF}~\cite{tancikFourierFeaturesLet2020}, and apply layer normalization~\cite{baLayerNormalization2016} before letting the backbone process it alongside the other glimpse tokens (patches, registers, and recurrent CLS token).

\subsection{Definitions: scene, viewpoint, crop}

We consider a finite 2D scene whose $(x,y)$ coordinates span $[-1,+1]^2$.

We call a \emph{viewpoint} a triplet $(x,y,s) \in \mathcal{V}_{\mathrm{raw}}$ such that the corresponding square crop,
\[
[x-s, x+s] \times [y-s, y+s],
\]
lies inside $[-1,+1]^2$. Equivalently,
\[
\mathcal{V}_{\mathrm{raw}} = \{\, (x,y,s) \in \mathbb{R}^2 \times (0,1] : |x| \le 1 - s,\ |y| \le 1 - s \,\}.
\]
For instance, the viewpoint $(0,0,1)$ spans the entire scene; the viewpoint $(0.5,0.5,0.5)$ spans the quadrant $[0,1]^2$; the viewpoint $(2, 2, 0.5)$ is invalid, as its center lies outside of the scene; the viewpoint $(0.5, 0.5, 1)$ is \emph{also} invalid, even though its center lies within the scene, because its borders extend beyond the scene boundaries.

\subsection{Finding a scale-invariant representation}

While the above representation of a viewpoint as a triplet $(x,y,s) \in \mathcal{V}_{\mathrm{raw}}$ is simple to understand and uniquely defines crops within the scene, it fails to represent an important property: \textbf{scale invariance}.

When considering viewpoints as vectors, we would like distances between viewpoints to be \textbf{invariant to global rescaling}. In other words, the distance between two side-by-side square crops should be identical regardless of zoom level; equivalently, a 10\% zoom of the scene should leave any pairwise viewpoint distance unchanged. This is not the case in a straightforward $(x,y,s)$ encoding, which becomes artificially insensitive to shifts in position and scale for small crops (i.e.\ when $s \ll 1$), thus being forced to under-represent fine detail at small scales, leading to loss of information, or over-represent it at large scales, leading to ill-conditioned representations with excessive sensitivity to small perturbations at large scales (when $s \approx 1$).

Beyond scale invariance, two further properties are natural: \textbf{same-scale translation invariance}, so that at a fixed zoom the encoding distance depends only on the relative offset between viewpoints, and \textbf{planar isotropy}, so that the encoding does not privilege a particular axis orientation. Because the scene is bounded, these properties are stated for transformations whose resulting viewpoints remain valid crops. We thus seek a smooth, injective $u : \mathcal{V}_{\mathrm{raw}} \to \mathcal{V}$ that, equipping $\mathcal{V}$ with the Euclidean distance $d$ inherited from $\mathbb{R}^3$, satisfies all three.

We propose the embedding $u : \mathcal{V}_{\mathrm{raw}} \to \mathcal{V} = u(\mathcal{V}_{\mathrm{raw}}) \subset \mathbb{R}^3$ defined by
\begin{align*}
u &: \mathcal{V}_{\mathrm{raw}} \to \mathcal{V} \\
& (x,y,s) \mapsto ( x/s, y/s, \log s ) = (u_1, u_2, u_3) \\
u^{-1} &: \mathcal{V} \to \mathcal{V}_{\mathrm{raw}} \\
& (u_1, u_2, u_3) \mapsto (\exp(u_3) u_1, \exp(u_3) u_2, \exp(u_3)) = (x,y,s).
\end{align*}
The component functions are smooth for $s > 0$, and the displayed inverse shows that $u$ is injective.

\subsection{Properties of $u$}\label{sec:vpe-properties}

Throughout, $q_i = (x_i, y_i, s_i) \in \mathcal{V}_{\mathrm{raw}}$ and $d$ is the Euclidean distance on $\mathcal{V} \subset \mathbb{R}^3$.

\begin{lemma}[Pairwise distance identity]\label{lem:pairwise}
For $q_i = (x_i, y_i, s_i) \in \mathcal{V}_{\mathrm{raw}}$,
\begin{align*}
d(u(q_1), u(q_2))^2 &= \| u(q_1) - u(q_2) \|_2^2 \\
&= \| ( x_1/s_1 - x_2/s_2,\ y_1/s_1 - y_2/s_2,\ \log s_1 - \log s_2 ) \|_2^2 \\
&= ( x_1/s_1 - x_2/s_2 )^2 + ( y_1/s_1 - y_2/s_2 )^2 + ( \log s_1 - \log s_2 )^2.
\end{align*}
\end{lemma}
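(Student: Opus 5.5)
The identity follows by unfolding definitions, so the plan is a direct verification in three short steps.

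First, I will record that $u$ is well defined on $\mathcal{V}_{\mathrm{raw}}$. Every $q_i = (x_i, y_i, s_i) \in \mathcal{V}_{\mathrm{raw}}$ satisfies $s_i \in (0,1]$, so both $x_i/s_i$ and $y_i/s_i$ are finite real numbers. Moreover, $\log s_i$ is defined and satisfies $\log s_i \le 0$. This lets us write
\[
u(q_i) = (x_i/s_i,\ y_i/s_i,\ \log s_i) \in \mathbb{R}^3
\]
unambiguously.

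Second, I will use the stated choice of metric. The metric $d$ on $\mathcal{V}$ is the Euclidean distance inherited from $\mathbb{R}^3$, so $d(u(q_1),u(q_2)) = \|u(q_1)-u(q_2)\|_2$. Squaring both sides gives the first equality. Vector subtraction in $\mathbb{R}^3$ is componentwise, so substituting the definition of $u$ gives the second line. The third line then follows from $\|(a,b,c)\|_2^2 = a^2+b^2+c^2$.

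I expect no real obstacle. The only point that needs care is the well-definedness of the divisions and the logarithm, and the constraint $s_i > 0$ built into $\mathcal{V}_{\mathrm{raw}}$ settles it. As an optional remark, I would note that $\log s_1 - \log s_2 = \log(s_1/s_2)$. Writing the third term this way prepares the scale-invariance argument that presumably follows, where $s_1$ and $s_2$ are rescaled by a common factor.
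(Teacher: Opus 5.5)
Your proposal is correct and matches the paper's proof, which is likewise a direct computation from the definition of $u$ and the Euclidean norm on $\mathcal{V} \subset \mathbb{R}^3$. Your added check that $s_i > 0$ makes the divisions and the logarithm well defined is a reasonable extra detail that the paper leaves implicit.
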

\begin{proof}
Direct computation from the definition of $u$ and the Euclidean norm on $\mathcal{V} \subset \mathbb{R}^3$.
\end{proof}

\begin{proposition}[Scale invariance]\label{prop:scale-invariance}
For all $c > 0$ such that $c q_i = (c x_i, c y_i, c s_i) \in \mathcal{V}_{\mathrm{raw}}$ for $i=1,2$,
\[
d(u(c q_1), u(c q_2)) = d(u(q_1), u(q_2)).
\]
\end{proposition}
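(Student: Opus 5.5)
The plan is to reduce everything to Lemma~\ref{lem:pairwise} and show that each of the three squared terms there is unchanged when $q_i$ is replaced by $c q_i$. First I will apply the lemma to the pair $(c q_1, c q_2)$. This is legitimate because the hypothesis guarantees $c q_i \in \mathcal{V}_{\mathrm{raw}}$ for $i = 1, 2$. The lemma then gives
\[
d(u(cq_1), u(cq_2))^2 = \Bigl(\tfrac{c x_1}{c s_1} - \tfrac{c x_2}{c s_2}\Bigr)^2 + \Bigl(\tfrac{c y_1}{c s_1} - \tfrac{c y_2}{c s_2}\Bigr)^2 + \bigl(\log(c s_1) - \log(c s_2)\bigr)^2 .
\]

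Next I simplify each term.
\begin{itemize}
\item In the two positional terms, $c > 0$ and $s_i > 0$, so $c s_i \neq 0$. The factor $c$ therefore cancels, giving $x_i/s_i$ and $y_i/s_i$.
\item In the scale term, $c s_i > 0$, so the logarithm is defined. Using $\log(c s_i) = \log c + \log s_i$, the $\log c$ contributions cancel in the difference, leaving $\log s_1 - \log s_2$.
\end{itemize}
The right-hand side is then exactly the expression Lemma~\ref{lem:pairwise} gives for $d(u(q_1), u(q_2))^2$.

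Finally, both distances are nonnegative, so equality of their squares gives equality of the distances themselves.

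There is no real obstacle here. The only points needing care are that the hypothesis $cq_i \in \mathcal{V}_{\mathrm{raw}}$ is what keeps $u(cq_i)$ well defined, and that positivity of $c$ and $s_i$ justifies both the cancellation and the use of the logarithm identity. An equivalent one-line view is that $u(cq) = u(q) + (0, 0, \log c)$, so scaling by $c$ acts on $u$ as a translation, and translations preserve Euclidean distance. I may present the argument in that form.
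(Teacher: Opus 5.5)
Your proof is correct and takes essentially the paper's approach: the paper computes $u(cq_i) = (x_i/s_i,\, y_i/s_i,\, \log s_i + \log c)$ and notes that $\log c$ cancels, so the difference vectors coincide, which is exactly your closing translation view. Your main route through Lemma~\ref{lem:pairwise} does the same cancellation on the squared terms and then takes square roots.
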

\begin{proof}
Let $c > 0$ and $q_i = (x_i,y_i,s_i)$. By the definition of $u$,
\[
u(c q_i) = ( x_i/s_i,\ y_i/s_i,\ \log s_i + \log c ),
\]
so the additive $\log c$ cancels in the difference:
\[
u(c q_1) - u(c q_2) = \left( \frac{x_1}{s_1} - \frac{x_2}{s_2},\ \frac{y_1}{s_1} - \frac{y_2}{s_2},\ \log s_1 - \log s_2 \right).
\]
Setting $c = 1$ in this identity yields $u(q_1) - u(q_2)$ component-wise; the two difference vectors are therefore equal, hence so are their norms.
\end{proof}

\begin{proposition}[Same-scale translation invariance]\label{prop:translation-invariance}
With $(x_1, y_1, s), (x_2, y_2, s) \in \mathcal{V}_{\mathrm{raw}}$, for any planar offset $(\Delta_x,\Delta_y) \in \mathbb{R}^2$ such that $(x_1+\Delta_x,y_1+\Delta_y,s)$ and $(x_2+\Delta_x,y_2+\Delta_y,s)$ lie in $\mathcal{V}_{\mathrm{raw}}$,
\[
d(u(x_1+\Delta_x, y_1+\Delta_y, s), u(x_2+\Delta_x, y_2+\Delta_y, s)) = d(u(x_1, y_1, s), u(x_2, y_2, s)).
\]
\end{proposition}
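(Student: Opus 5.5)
The plan is a direct computation in the style of Proposition~\ref{prop:scale-invariance}: show that the difference vector $u(q_1') - u(q_2')$ for the translated viewpoints equals $u(q_1) - u(q_2)$ component-wise, so their Euclidean norms agree.

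First, fix the shared scale $s$ and write $q_i' = (x_i + \Delta_x, y_i + \Delta_y, s)$. The hypothesis puts each $q_i'$ in $\mathcal{V}_{\mathrm{raw}}$, so $u$ is defined on it. Applying the definition of $u$ gives
\[
u(q_i') = \left( \frac{x_i}{s} + \frac{\Delta_x}{s},\ \frac{y_i}{s} + \frac{\Delta_y}{s},\ \log s \right).
\]
This works because both viewpoints share the same $s$, so the translation becomes the same additive offset $(\Delta_x/s, \Delta_y/s, 0)$ in $\mathcal{V}$ for both.

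Next, subtract. The common offset cancels, leaving
\[
u(q_1') - u(q_2') = \left( \frac{x_1 - x_2}{s},\ \frac{y_1 - y_2}{s},\ 0 \right).
\]
The same computation with $\Delta_x = \Delta_y = 0$ shows this is exactly $u(q_1) - u(q_2)$. Equivalently, Lemma~\ref{lem:pairwise} with $s_1 = s_2 = s$ gives both squared distances as $\big((x_1-x_2)^2 + (y_1-y_2)^2\big)/s^2$. Equal difference vectors have equal norms, which concludes the proof.

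There is no real obstacle here. The only point needing care is that the equal-scale assumption is essential. If $s_1 \neq s_2$, the offsets $\Delta_x/s_1$ and $\Delta_x/s_2$ would no longer cancel, so I would state explicitly that this is where the "same-scale" restriction enters.
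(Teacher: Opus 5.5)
Your proof is correct and matches the paper's. The paper also argues by direct computation: it uses Lemma~\ref{lem:pairwise} with $s_1 = s_2 = s$ to reduce both squared distances to $\bigl((x_1-x_2)/s\bigr)^2 + \bigl((y_1-y_2)/s\bigr)^2$, which is exactly your secondary route, and your primary framing via equal difference vectors (mirroring Proposition~\ref{prop:scale-invariance}) is an equally immediate form of the same computation.
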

\begin{proof}
By Lemma~\ref{lem:pairwise},
\[
d(u(x_1, y_1, s), u(x_2, y_2, s))^2 = \left( \frac{x_1 - x_2}{s} \right)^2 + \left( \frac{y_1 - y_2}{s} \right)^2,
\]
and
\begin{align*}
d(u(x_1+\Delta_x, y_1+\Delta_y, s), u(x_2+\Delta_x, y_2+\Delta_y, s))^2
&= \left( \frac{(x_1+\Delta_x) - (x_2+\Delta_x)}{s} \right)^2 \\
&\quad + \left( \frac{(y_1+\Delta_y) - (y_2+\Delta_y)}{s} \right)^2 \\
&= \left( \frac{x_1 - x_2}{s} \right)^2 + \left( \frac{y_1 - y_2}{s} \right)^2.
\end{align*}
The two right-hand sides agree.
\end{proof}

\begin{proposition}[Planar isotropy]\label{prop:isotropy}
For any $Q \in \mathrm{O}(2)$, with $(x_{i,Q}, y_{i,Q}, s_i) \in \mathcal{V}_{\mathrm{raw}}$ for $i=1,2$ denoting the transformed coordinates $(x_{i,Q}, y_{i,Q}) = Q(x_i, y_i)$,
\[
d(u(x_{1,Q}, y_{1,Q}, s_1), u(x_{2,Q}, y_{2,Q}, s_2)) = d(u(x_1, y_1, s_1), u(x_2, y_2, s_2)).
\]
\end{proposition}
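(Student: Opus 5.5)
The plan is to write the squared distance via Lemma~\ref{lem:pairwise} and notice that its first two terms are exactly the squared Euclidean norm in $\mathbb{R}^2$ of a single planar vector. Concretely, set $p_i = (x_i, y_i)$. Then
\[
\left(\frac{x_1}{s_1} - \frac{x_2}{s_2}\right)^2 + \left(\frac{y_1}{s_1} - \frac{y_2}{s_2}\right)^2 = \left\| \frac{p_1}{s_1} - \frac{p_2}{s_2} \right\|_2^2,
\]
and the third term $(\log s_1 - \log s_2)^2$ depends only on the scales.

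Next, apply the transformation. Replacing $p_i$ by $Q p_i$ leaves the scales $s_i$ untouched, so the logarithmic term is unchanged. By linearity of $Q$, the planar term becomes
\[
\left\| \frac{Q p_1}{s_1} - \frac{Q p_2}{s_2} \right\|_2^2 = \left\| Q\left(\frac{p_1}{s_1} - \frac{p_2}{s_2}\right) \right\|_2^2 .
\]
Since $Q \in \mathrm{O}(2)$ satisfies $Q^\top Q = I$, we have $\|Qw\|_2 = \|w\|_2$ for every $w \in \mathbb{R}^2$, so this equals the original planar term.

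Adding the two terms back together shows that the squared distances agree. Taking nonnegative square roots then gives the claim.

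There is no real obstacle here. The only care needed is the point where linearity of $Q$ is used to pull the per-point scalings $1/s_i$ inside, which works because they are scalars. The validity hypothesis $(x_{i,Q}, y_{i,Q}, s_i) \in \mathcal{V}_{\mathrm{raw}}$ is needed only so that $u$ is defined at the transformed points; it plays no other role in the computation.
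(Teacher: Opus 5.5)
Your proposal is correct and matches the paper's proof essentially step for step. You expand via Lemma~\ref{lem:pairwise} and write the planar part as $\|p_1/s_1 - p_2/s_2\|_2^2$; you then use linearity of $Q$ to factor it out and orthogonality ($\|Qw\|_2 = \|w\|_2$) to remove it, while the $(\log s_1 - \log s_2)^2$ term is unchanged.
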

\begin{proof}
By Lemma~\ref{lem:pairwise},
\[
d(u(x_1,y_1,s_1), u(x_2,y_2,s_2))^2 = \| (x_1/s_1, y_1/s_1) - (x_2/s_2, y_2/s_2) \|_2^2 + (\log s_1 - \log s_2)^2.
\]
For any $Q \in \mathrm{O}(2)$,
\begin{align*}
d(u(x_{1,Q}, y_{1,Q}, s_1), u(x_{2,Q}, y_{2,Q}, s_2))^2
&= \| Q(x_1,y_1)/s_1 - Q(x_2,y_2)/s_2 \|_2^2 \\
&\quad + (\log s_1 - \log s_2)^2 \\
&= \| Q((x_1/s_1, y_1/s_1) - (x_2/s_2, y_2/s_2)) \|_2^2 \\
&\quad + (\log s_1 - \log s_2)^2 \\
&= \| (x_1/s_1, y_1/s_1) - (x_2/s_2, y_2/s_2) \|_2^2 \\
&\quad + (\log s_1 - \log s_2)^2,
\end{align*}
where the second equality uses linearity of $Q$, and the third uses orthogonality $\|Qv\|_2 = \|v\|_2$ for all $v \in \mathbb{R}^2$. The result follows.
\end{proof}

\section{CanViT-B pretraining details}
\label{supp:pretraining-details}

\textbf{Hyperparameters.}
We list architectural hyperparameters in Table~\ref{tab:pretrain-arch-hparams},
and ImageNet-21k pretraining hyperparameters in Table~\ref{tab:pretrain-hparams}.
Similarly to DINOv3 ViTs, we use LayerScale~\cite{touvronGoingDeeperImage2021} in the ViT backbone.

\textbf{Precomputation of teacher features.}
To eliminate the repeated cost of the teacher's forward pass during training, we precompute DINOv3 ViT-B features for all 13.2 million ImageNet-21k images at $512^2$\,px input resolution, which corresponds to a $32 \times 32$ patch grid.
Images are processed without augmentation (resize shortest side to 512\,px, center crop).
We performed this one-time export in parallel over many MIG 1g.10gb H100 instances, with a small total footprint of approximately 8 H100-equivalent hours for our entire training set.
We store features in float16, totaling approximately 19~TiB, and organize them into shards of 4096 images each.
Each shard contains whole-scene dense patch features and CLS tokens, both obtained after DINOv3's final LayerNorm.
Per-position z-score standardization statistics (mean and variance for each spatial position and embedding dimension) are precomputed from a representative sample of 4096 images.
We pre-shuffle shards during dataset export, enabling us to process the shards themselves sequentially during pretraining.
This strategy enables high-bandwidth streaming from networked storage, without incurring the high cost of a random data access pattern.

\textbf{Numerical precision considerations.}
The use of mixed-precision training is critical for efficiency, but comes with numerical correctness considerations, particularly in long-horizon scenarios.
Over the course of this project, we encountered several subtle correctness issues, which only led to meaningful regressions after several hundred thousand steps of pretraining.
For safe CanViT pretraining with minimal performance impact, we keep the canvas in float32 as it accumulates updates across time and depth, and keep all coordinate-related components in float32: grid coordinates, \ac{RoPE} computations, VPE token projection matrix and creation.
Other operations, including \ac{SDPA} operations and learned projection layers, can be safely performed in bfloat16, following standard autocast operator rules.
We perform the backward pass outside of the AMP region, setting \texttt{backward\_pass\_autocast="off"} accordingly when using \texttt{torch.compile}.

\begin{table}[h]
  \caption{\textbf{CanViT-B architecture.}}
  \label{tab:pretrain-arch-hparams}
  \centering
  \begin{tabular}{ll}
    \toprule
    \textbf{Hyperparameter} & \textbf{Value} \\
    \midrule
    Backbone & ViT-B/16 \\
    Backbone embedding dim & 768 \\
    Backbone registers (ephemeral) & 5 \\
    Glimpse patch size & $16^2$\,px \\
    Canvas embedding dim & 1024 \\
    Canvas registers (persistent) & 16 \\
    Canvas Attention heads & 8 \\
    Canvas Attention head dim. & 128 \\
    Canvas Attention R/W stride & 2 \\
    RoPE base period & 100 \\
    RoPE precision & float32 \\
    VPE token & Enabled \\
    VPE \ac{RFF}~\cite{tancikFourierFeaturesLet2020} std.\ deviation $\sigma$ & 1 \\
    \bottomrule
  \end{tabular}
\end{table}

\begin{table}[h]
  \caption{\textbf{CanViT-B pretraining hyperparameters.}}
  \label{tab:pretrain-hparams}
  \centering
  \begin{minipage}[t]{0.49\linewidth}
    \centering
    \setlength{\tabcolsep}{3.4pt}%
    \begin{tabular}{ll}
      \toprule
      \textbf{Hyperparameter} & \textbf{Value} \\
      \midrule
      Optimizer & AdamW \\
      Initial learning rate & $1.00 \times 10^{-7}$ \\
      Peak learning rate & $4.00 \times 10^{-4}$ \\
      LR schedule & Warmup $\to$ Const. \\
      LR warmup steps & 100{,}000 \\
      Weight decay & $1 \times 10^{-4}$ \\
      Gradient clipping & 1.0 max norm \\
      AdamW $\beta_1$, $\beta_2$ & 0.9, 0.999 \\
      TBPTT chunk size & $K = 2$ glimpses \\
      Stop prob. & $p_{\mathrm{stop}} = 0.5$ \\
      Rollouts per step & 1 F-IID + 1 R-IID \\
      \bottomrule
    \end{tabular}
  \end{minipage}
  \hfill
  \begin{minipage}[t]{0.49\linewidth}
    \centering
    \setlength{\tabcolsep}{3.4pt}%
    \begin{tabular}{ll}
      \toprule
      \textbf{Hyperparameter} & \textbf{Value} \\
      \midrule
      Dataset & ImageNet-21k \\
      IN21k version & winter21\_whole \\
      Preprocessing & Resize(512), CenterCrop \\
      Data augmentation & None \\
      Scene resolution & $512^2$\,px (1024 patches) \\
      Glimpse resolution & $128^2$\,px (64 patches) \\
      Batch size & 64 scenes \\
      Total training steps & $2.00 \times 10^6$ \\
      Min scale & 0.05 (0.25\% of area) \\
      Forward precision & AMP bfloat16 \\
      ViT LayerScale init. & $1 \times 10^{-5}$ \\
      \bottomrule
    \end{tabular}
  \end{minipage}
\end{table}

\section{Evaluation details}
\label{supp:evaluation-details}

\subsection{Viewing policies}
\label{supp:policy-definitions}

\textbf{Full-i.i.d.\ (F-IID)} and \textbf{Random-i.i.d.\ (R-IID)} are described as part of our pretraining scheme (Section~\ref{sec:pretraining}).
F-IID begins with a full-scene viewpoint ($s = 1$) then samples i.i.d.\ random crops, whereas R-IID samples i.i.d.\ random crops at all timesteps, including $t = 0$.

\textbf{Coarse-to-Fine (C2F)} traverses a quadtree over the scene from coarse to fine. At level $\ell \ge 0$, the viewpoint scale is $s_\ell = 2^{-\ell}$, tiling the scene into a $2^\ell \times 2^\ell$ grid of non-overlapping crops. Let $V_\ell$ denote the ordered set of viewpoints at level $\ell$; within each level, we visit tiles in a random order $\sigma_\ell(V_\ell)$ (where $\sigma_\ell$ is an independent random permutation). The full sequence is the concatenation $V_0, \sigma_1(V_1), \sigma_2(V_2), \ldots$, truncated to the glimpse budget $T$.

\textbf{Fine-to-Coarse (F2C)} reverses C2F viewpoint sequences. For any given glimpse budget, F2C starts from the finest level and progresses toward coarser views. After identical image coverage, the C2F vs.\ F2C comparison isolates the effect of processing order.

\textbf{Entropy-Guided C2F (EG-C2F)} is a variant of C2F that prioritizes informative regions at each scale level. Rather than visiting tiles in random order, EG-C2F ranks them by the Shannon entropy of the per-position class distribution predicted by the segmentation probe, visiting high-entropy (uncertain) tiles first.

\textbf{Repeated Full-Scene (RFS)} repeats the viewpoint $(x, y, s) = (0, 0, 1)$ at every timestep. It serves as a recurrence-only control: any improvement over $t = 0$ must come from iterative canvas refinement with a fixed glimpse input.

\subsection{FLOP counting}
\label{supp:flop-counting}

We count each multiply-add (MAC) operation as two floating-point operations (FLOPs). We sourced architecture parameters for DINOv3~\cite{simeoniDINOv32026}, AME~\cite{AMEPardylRK0T23} and AdaGlimpse~\cite{pardylAdaGlimpseActiveVisual2025} from the corresponding papers and code releases.\footnote{\url{https://github.com/facebookresearch/dinov3}, \url{https://github.com/apardyl/AME},\\ \url{https://github.com/apardyl/AdaGlimpse}}
To obtain fine-grained FLOP curves, we computed FLOP counts analytically, following the same standard formulas as PyTorch's \texttt{flop\_counter} module.
We validated our total counts end-to-end against traced FLOP counts as well as previously-reported FLOP counts, which were published for DINOv3 but not for AdaGlimpse or AME.

\subsection{Task: ADE20K segmentation}
\label{supp:task-ade20k}

\textbf{Probe training.} Each probe consists of dropout~\cite{srivastavaDropoutSimpleWay2014} ($p = 0.1$), batch normalization~\cite{ioffeBatchNormalizationAccelerating2015}, and a linear classifier in the form of a $1 \times 1$ convolution.
For CanViT, this head is preceded by a layer normalization~\cite{baLayerNormalization2016} and applied to canvas patches.
For the DINOv3 baselines, the head is applied directly to the feature map, as the DINOv3 features that we consider are already layer-normalized.
We train for 40{,}000 steps with AdamW with peak learning rate $3 \times 10^{-4}$ and weight decay $10^{-3}$,
under a 1{,}500-step linear warmup followed by cosine decay,
at batch size 16.
Training-time augmentations are random scale-jittered $512^2$ crops (scale factor in $[0.5, 2.0]$) and horizontal flips.
During CanViT probe training, glimpse viewpoints are drawn i.i.d.\ under the R-IID policy across $T = 10$ timesteps.

\textbf{Evaluation.} Similarly to our active-vision ADE20K baselines~\cite{AMEPardylRK0T23,pardylAdaGlimpseActiveVisual2025}, we resize all images and masks to a fixed size (here, $512 \times 512$). We apply this preprocessing consistently when evaluating CanViT-B and DINOv3 ViT-B, ensuring a fair comparison between the models. Across up to $T = \adeMaxT$ timesteps, we report ADE20K mIoU by policy and timestep at canvas grids $32 \times 32$ and $64 \times 64$ on Figure~\ref{fig:main-ade20k-in1k-results}\textbf{B} and Table~\ref{supp:tab-policy-eval}. Table~\ref{supp:tab-ade20k-canvas-sweep} reports best-$t$ mIoU at canvas grids $c \times c$ for $c \in \{8, 16, 32, 64\}$.

\begin{table}[h]
  \caption{\textbf{ADE20K mIoU (\%) by policy and timestep ($T = 21$), frozen CanViT-B.}
  Cells: $\pm$ \bootstrapCiPct\% bootstrap CI half-width ($n = \adePolicyEvalStochN$ runs) for stochastic policies.
  $^\dagger$: deterministic policy.
  \textbf{Bold} = best across all policies at that timestep.}
  \label{supp:tab-policy-eval}
  \centering
  \small
  \setlength{\tabcolsep}{4pt}%
  \begin{tabular}{lccccccccc}
    \toprule
    \textbf{Policy} & \textbf{Canvas} & \textbf{$t{=}0$} & \textbf{$t{=}1$} & \textbf{$t{=}2$} & \textbf{$t{=}3$} & \textbf{$t{=}4$} & \textbf{$t{=}9$} & \textbf{$t{=}16$} & \textbf{$t{=}20$} \\
    \midrule
    EG-C2F$^\dagger$ & $32^2$ & 38.5 & 41.1 & 42.0 & 42.7 & 43.2 & 43.9 & 44.2 & 44.1 \\
    EG-C2F$^\dagger$ & $64^2$ & \textbf{39.6} & \textbf{42.2} & \textbf{43.3} & \textbf{44.1} & 44.7 & \textbf{45.6} & \textbf{45.8} & 45.7 \\
    C2F & $32^2$ & \makecell{38.5 \\ $\pm0.00$} & \makecell{40.3 \\ $\pm0.09$} & \makecell{41.4 \\ $\pm0.09$} & \makecell{42.3 \\ $\pm0.05$} & \makecell{43.2 \\ $\pm0.03$} & \makecell{43.6 \\ $\pm0.03$} & \makecell{44.0 \\ $\pm0.05$} & \makecell{44.2 \\ $\pm0.02$} \\
    C2F & $64^2$ & \textbf{\makecell{39.6 \\ $\pm0.00$}} & \makecell{41.3 \\ $\pm0.10$} & \makecell{42.5 \\ $\pm0.08$} & \makecell{43.6 \\ $\pm0.06$} & \textbf{\makecell{44.7 \\ $\pm0.03$}} & \makecell{45.1 \\ $\pm0.05$} & \makecell{45.6 \\ $\pm0.03$} & \textbf{\makecell{45.9 \\ $\pm0.02$}} \\
    F-IID & $32^2$ & \makecell{38.5 \\ $\pm0.00$} & \makecell{40.0 \\ $\pm0.10$} & \makecell{40.8 \\ $\pm0.08$} & \makecell{41.3 \\ $\pm0.09$} & \makecell{41.7 \\ $\pm0.08$} & \makecell{42.7 \\ $\pm0.08$} & \makecell{43.2 \\ $\pm0.07$} & \makecell{43.3 \\ $\pm0.06$} \\
    F-IID & $64^2$ & \textbf{\makecell{39.6 \\ $\pm0.00$}} & \makecell{41.2 \\ $\pm0.10$} & \makecell{42.0 \\ $\pm0.09$} & \makecell{42.5 \\ $\pm0.10$} & \makecell{43.0 \\ $\pm0.09$} & \makecell{44.1 \\ $\pm0.06$} & \makecell{44.7 \\ $\pm0.08$} & \makecell{44.9 \\ $\pm0.07$} \\
    R-IID & $32^2$ & \makecell{18.1 \\ $\pm0.53$} & \makecell{26.0 \\ $\pm0.34$} & \makecell{30.5 \\ $\pm0.19$} & \makecell{33.2 \\ $\pm0.14$} & \makecell{35.1 \\ $\pm0.16$} & \makecell{39.1 \\ $\pm0.14$} & \makecell{41.1 \\ $\pm0.12$} & \makecell{41.6 \\ $\pm0.09$} \\
    R-IID & $64^2$ & \makecell{18.3 \\ $\pm0.30$} & \makecell{27.1 \\ $\pm0.37$} & \makecell{31.6 \\ $\pm0.29$} & \makecell{34.3 \\ $\pm0.19$} & \makecell{36.3 \\ $\pm0.16$} & \makecell{40.5 \\ $\pm0.17$} & \makecell{42.6 \\ $\pm0.12$} & \makecell{43.1 \\ $\pm0.13$} \\
    RFS$^\dagger$ & $32^2$ & 38.5 & 40.0 & 40.0 & 39.9 & 39.8 & 39.1 & 38.5 & 38.2 \\
    RFS$^\dagger$ & $64^2$ & \textbf{39.6} & 41.1 & 41.1 & 41.0 & 40.9 & 40.3 & 39.6 & 39.3 \\
    F2C & $32^2$ & \makecell{10.8 \\ $\pm0.14$} & \makecell{18.0 \\ $\pm0.21$} & \makecell{22.3 \\ $\pm0.17$} & \makecell{25.5 \\ $\pm0.13$} & \makecell{27.8 \\ $\pm0.17$} & \makecell{34.2 \\ $\pm0.21$} & \makecell{38.6 \\ $\pm0.13$} & \makecell{41.2 \\ $\pm0.10$} \\
    F2C & $64^2$ & \makecell{11.0 \\ $\pm0.38$} & \makecell{18.1 \\ $\pm0.32$} & \makecell{22.6 \\ $\pm0.29$} & \makecell{25.9 \\ $\pm0.33$} & \makecell{28.3 \\ $\pm0.29$} & \makecell{35.4 \\ $\pm0.36$} & \makecell{40.1 \\ $\pm0.19$} & \makecell{42.9 \\ $\pm0.15$} \\
 
    \bottomrule
  \end{tabular}
\end{table}

\begin{table}[h]
  \caption{\textbf{ADE20K mIoU (\%) across canvas grid sizes, frozen CanViT-B.}
  One independently-trained linear probe per canvas grid $c$.
  Cells: best-$t$ mIoU mean, with $\pm$ \bootstrapCiPct\% bootstrap CI half-width ($n = \adeSweepN$ runs) for stochastic policies.
  $^\dagger$: deterministic policy.}
  \label{supp:tab-ade20k-canvas-sweep}
  \centering
  \small
  \begin{tabular}{lcccc}
    \toprule
    \adeSweepHeader \\
    \midrule
    EG-C2F$^\dagger$ & 31.6 & 39.3 & 44.2 & 45.8 \\
    C2F & \makecell{31.6 \\ $\pm0.03$} & \makecell{39.1 \\ $\pm0.04$} & \makecell{44.2 \\ $\pm0.02$} & \makecell{45.9 \\ $\pm0.02$} \\
    F-IID & \makecell{30.7 \\ $\pm0.04$} & \makecell{38.1 \\ $\pm0.09$} & \makecell{43.3 \\ $\pm0.06$} & \makecell{44.9 \\ $\pm0.07$} \\
    R-IID & \makecell{29.5 \\ $\pm0.10$} & \makecell{37.2 \\ $\pm0.07$} & \makecell{41.6 \\ $\pm0.09$} & \makecell{43.1 \\ $\pm0.13$} \\
    RFS$^\dagger$ & 30.3 & 36.8 & 40.0 & 41.1 \\
    F2C & \makecell{29.2 \\ $\pm0.14$} & \makecell{36.8 \\ $\pm0.11$} & \makecell{41.2 \\ $\pm0.10$} & \makecell{42.9 \\ $\pm0.15$} \\
 
    \bottomrule
  \end{tabular}
\end{table}

\subsection{Task: ImageNet-1k classification}
\label{supp:task-in1k}

\textbf{Probe training.} As the DINOv3 authors did not release ImageNet-1k classification heads for models other than their 7B-parameter flagship, we trained and released our own linear probes for all five smaller ViT variants (\S\ref{supp:in1k-dinov3-probe-training}). Since CanViT-B was pretrained to predict DINOv3 representations, it is possible to use a DINOv3-fitted probe with it. We initialize the linear classification head applied to CanViT-B's recurrent CLS token by algebraically fusing three consecutive affine transforms (CanViT's pretrained CLS projection, CLS destandardization, and our DINOv3 ViT-B IN1k linear probe) into a single LayerNorm $\to$ Linear layer.

\textbf{Fine-tuning.} For ImageNet-1k full fine-tuning, we start from an already-fit frozen classification head then unfreeze CanViT and fine-tune all model and head parameters end-to-end, a two-stage approach known as LP-FT (linear probing then fine-tuning)~\cite{kumarFineTuningCanDistort2022}. We train for \inkFtEpochs{} epochs with \inkFtOptimizer{} (peak learning rate $\inkFtLr$, weight decay $\inkFtWeightDecay$, gradient clipping at \inkFtGradClip{} max norm), batch size \inkFtBatchSize{}, and label smoothing \inkFtLabelSmoothing{}. The learning rate follows a linear warmup over \inkFtWarmupSteps{} steps then cosine decay to zero (\inkFtTotalSteps{} total steps). Training images are augmented with RandomResizedCrop($512^2$\,px, scale $\in [0.2, 1.0]$) and horizontal flips. Each step processes $T = \inkFtNGlimpses$ F-IID glimpses with \inkFtBpttMode{} backpropagation through time. We performed this fine-tuning step on TPU v6e-4 hardware using \texttt{torch\_xla} and SPMD for multi-chip parallelism.

\textbf{Evaluation.} When evaluating on the ImageNet-1k~\cite{russakovskyImageNetLargeScale2015} validation set, we preprocess images by resizing the shortest side to 512 pixels followed by a center crop. Across up to $T = \inkMaxT$ timesteps, we report frozen CanViT-B IN1k top-1 accuracy by policy and timestep at canvas grid $32 \times 32$ on Figure~\ref{fig:main-ade20k-in1k-results}\textbf{C} (frozen curves) and Table~\ref{supp:tab-in1k-frozen-eval}. Table~\ref{supp:tab-in1k-canvas-sweep} reports best-$t$ top-1 at canvas grids $c \times c$ for $c \in \{8, 16, 32, 64\}$.
We report fine-tuned CanViT-B IN1k top-1 accuracy by policy and timestep on Figure~\ref{fig:main-ade20k-in1k-results}\textbf{C} (fine-tuned curves) and Table~\ref{supp:tab-in1k-finetuned-eval}.
Canvas resolution has only a marginal effect on best-$t$ top-1 on IN1k (Table~\ref{supp:tab-in1k-canvas-sweep}), unlike on ADE20K (Table~\ref{supp:tab-ade20k-canvas-sweep}).

\begin{table}[h]
  \caption{\textbf{ImageNet-1k top-1 accuracy (\%) across timesteps, frozen CanViT-B ($32 \times 32$ canvas).}
  Cells: $\pm$ \bootstrapCiPct\% bootstrap CI half-width ($n = \inkFrozenStochN$ runs) for stochastic policies.
  $^\dagger$: deterministic policy.
  \textbf{Bold} = best policy at each timestep.}
  \label{supp:tab-in1k-frozen-eval}
  \centering
  \small
  \setlength{\tabcolsep}{4pt}%
  \begin{tabular}{lccccccccc}
    \toprule
    \textbf{Policy} & \textbf{$t{=}0$} & \textbf{$t{=}1$} & \textbf{$t{=}2$} & \textbf{$t{=}3$} & \textbf{$t{=}4$} & \textbf{$t{=}5$} & \textbf{$t{=}9$} & \textbf{$t{=}15$} & \textbf{$t{=}20$} \\
    \midrule
    C2F & \textbf{\makecell{76.79 \\ $\pm0.00$}} & \textbf{\makecell{78.62 \\ $\pm0.05$}} & \textbf{\makecell{79.58 \\ $\pm0.02$}} & \textbf{\makecell{80.25 \\ $\pm0.03$}} & \textbf{\makecell{80.77 \\ $\pm0.02$}} & \textbf{\makecell{80.85 \\ $\pm0.02$}} & \textbf{\makecell{81.02 \\ $\pm0.03$}} & \textbf{\makecell{81.12 \\ $\pm0.03$}} & \makecell{81.13 \\ $\pm0.02$} \\
    F-IID & \textbf{\makecell{76.79 \\ $\pm0.00$}} & \makecell{78.50 \\ $\pm0.04$} & \makecell{79.32 \\ $\pm0.04$} & \makecell{79.81 \\ $\pm0.05$} & \makecell{80.13 \\ $\pm0.04$} & \makecell{80.37 \\ $\pm0.04$} & \makecell{80.86 \\ $\pm0.03$} & \makecell{81.09 \\ $\pm0.03$} & \textbf{\makecell{81.14 \\ $\pm0.02$}} \\
    R-IID & \makecell{47.73 \\ $\pm0.10$} & \makecell{65.23 \\ $\pm0.08$} & \makecell{72.23 \\ $\pm0.09$} & \makecell{75.50 \\ $\pm0.06$} & \makecell{77.24 \\ $\pm0.06$} & \makecell{78.22 \\ $\pm0.06$} & \makecell{79.71 \\ $\pm0.06$} & \makecell{80.37 \\ $\pm0.04$} & \makecell{80.56 \\ $\pm0.05$} \\
    RFS$^\dagger$ & \textbf{76.79} & 77.85 & 77.99 & 78.01 & 78.01 & 77.99 & 77.73 & 77.40 & 77.16 \\
    F2C & \makecell{32.22 \\ $\pm0.07$} & \makecell{49.04 \\ $\pm0.08$} & \makecell{58.57 \\ $\pm0.07$} & \makecell{64.40 \\ $\pm0.06$} & \makecell{68.23 \\ $\pm0.10$} & \makecell{70.81 \\ $\pm0.10$} & \makecell{75.93 \\ $\pm0.08$} & \makecell{78.52 \\ $\pm0.06$} & \makecell{79.78 \\ $\pm0.05$} \\
 
    \bottomrule
  \end{tabular}
\end{table}

\ifnum\inkSweepHasData=1
\begin{table}[h]
  \caption{\textbf{ImageNet-1k top-1 accuracy (\%) across canvas grid sizes, frozen CanViT-B.}
  Same IN1k probe across columns, evaluated at canvas grid $c \times c$.
  Cells: best-$t$ top-1 mean, with $\pm$ \bootstrapCiPct\% bootstrap CI half-width ($n = \inkSweepN$ runs) for stochastic policies.
  $^\dagger$: deterministic policy.}
  \label{supp:tab-in1k-canvas-sweep}
  \centering
  \small
  \begin{tabular}{lcccc}
    \toprule
    \inkSweepHeader \\
    \midrule
    C2F & \makecell{80.61 \\ $\pm0.02$} & \makecell{81.08 \\ $\pm0.02$} & \makecell{81.14 \\ $\pm0.03$} & \makecell{81.16 \\ $\pm0.01$} \\
    F-IID & \makecell{80.74 \\ $\pm0.04$} & \makecell{81.06 \\ $\pm0.03$} & \makecell{81.14 \\ $\pm0.02$} & \makecell{81.14 \\ $\pm0.03$} \\
    R-IID & \makecell{80.25 \\ $\pm0.03$} & \makecell{80.52 \\ $\pm0.05$} & \makecell{80.56 \\ $\pm0.05$} & \makecell{80.58 \\ $\pm0.04$} \\
    RFS$^\dagger$ & 77.78 & 77.99 & 78.01 & 78.02 \\
    F2C & \makecell{79.37 \\ $\pm0.04$} & \makecell{79.69 \\ $\pm0.02$} & \makecell{79.78 \\ $\pm0.05$} & \makecell{79.80 \\ $\pm0.03$} \\
 
    \bottomrule
  \end{tabular}
\end{table}
\fi

\begin{table}[h]
  \caption{\textbf{ImageNet-1k top-1 accuracy (\%) across timesteps, fine-tuned CanViT-B ($32 \times 32$ canvas).}
  Cells: $\pm$ \bootstrapCiPct\% bootstrap CI half-width ($n = \inkFinetunedStochN$ runs) for stochastic policies.
  $^\dagger$: deterministic policy.
  \textbf{Bold} = best policy at each timestep.}
  \label{supp:tab-in1k-finetuned-eval}
  \centering
  \small
  \setlength{\tabcolsep}{4pt}%
  \begin{tabular}{lccccccccc}
    \toprule
    \textbf{Policy} & \textbf{$t{=}0$} & \textbf{$t{=}1$} & \textbf{$t{=}2$} & \textbf{$t{=}3$} & \textbf{$t{=}4$} & \textbf{$t{=}5$} & \textbf{$t{=}9$} & \textbf{$t{=}15$} & \textbf{$t{=}20$} \\
    \midrule
    C2F & \textbf{\makecell{80.59 \\ $\pm0.00$}} & \textbf{\makecell{82.25 \\ $\pm0.04$}} & \textbf{\makecell{83.21 \\ $\pm0.03$}} & \textbf{\makecell{83.90 \\ $\pm0.03$}} & \textbf{\makecell{84.35 \\ $\pm0.03$}} & \textbf{\makecell{84.43 \\ $\pm0.03$}} & \textbf{\makecell{84.51 \\ $\pm0.01$}} & \textbf{\makecell{84.53 \\ $\pm0.01$}} & \textbf{\makecell{84.51 \\ $\pm0.03$}} \\
    F-IID & \textbf{\makecell{80.59 \\ $\pm0.00$}} & \makecell{82.24 \\ $\pm0.03$} & \makecell{82.95 \\ $\pm0.04$} & \makecell{83.40 \\ $\pm0.03$} & \makecell{83.70 \\ $\pm0.03$} & \makecell{83.89 \\ $\pm0.04$} & \makecell{84.25 \\ $\pm0.04$} & \makecell{84.42 \\ $\pm0.03$} & \makecell{84.43 \\ $\pm0.04$} \\
    R-IID & \makecell{50.99 \\ $\pm0.11$} & \makecell{68.90 \\ $\pm0.07$} & \makecell{75.90 \\ $\pm0.09$} & \makecell{79.07 \\ $\pm0.08$} & \makecell{80.66 \\ $\pm0.06$} & \makecell{81.56 \\ $\pm0.04$} & \makecell{82.97 \\ $\pm0.05$} & \makecell{83.50 \\ $\pm0.05$} & \makecell{83.67 \\ $\pm0.04$} \\
    RFS$^\dagger$ & \textbf{80.59} & 81.59 & 81.81 & 81.76 & 81.62 & 81.53 & 81.21 & 80.69 & 80.43 \\
    F2C & \makecell{35.77 \\ $\pm0.10$} & \makecell{53.47 \\ $\pm0.07$} & \makecell{63.01 \\ $\pm0.07$} & \makecell{68.78 \\ $\pm0.08$} & \makecell{72.47 \\ $\pm0.09$} & \makecell{74.91 \\ $\pm0.09$} & \makecell{79.38 \\ $\pm0.07$} & \makecell{81.40 \\ $\pm0.05$} & \makecell{82.38 \\ $\pm0.04$} \\
 
    \bottomrule
  \end{tabular}
\end{table}

\subsection{Methodology for Figure~\ref{fig:resolution-and-mask-size-analysis} analyses}
\label{supp:iou-obj}

\textbf{Panels A and B.} We evaluate the impact of canvas resolution on ADE20K segmentation performance, notably examining how IoU varies as a function of ground truth mask size.
To do so, we collect all (scene, class) pairs from the validation set (i.e., where the ground-truth mask is non-empty).
For the mask corresponding to each such pair, we measure its area as the ratio between mask and scene pixels and compute per-mask IoU with the predictions from frozen DINOv3 and frozen CanViT-B models.
Unlike the class-level mIoU reported elsewhere, per-mask IoU is computed for each non-zero mask instance rather than by summing pixels across the full validation set, and consequently does not consider false positive predictions from scenes where the class is absent.

We use $128^2$\,px input images for DINOv3 and CanViT-B models in Figure~\ref{fig:resolution-and-mask-size-analysis}\,A--B.
To obtain a desired output resolution, we fix CanViT-B's canvas resolution to $8^2$, $16^2$, $32^2$, or $64^2$ at inference.
DINOv3's output resolution is determined by its input resolution; with its $16^2$\,px patch size, it produces an $8^2$ feature map.
Each model-resolution pair under consideration is evaluated with its own independently trained linear probe, yielding four CanViT-B probes (one per canvas resolution) and one DINOv3 probe.
This pipeline produces one (scene, class, mask area, timestep, IoU) datapoint per ground-truth mask per model.
For CanViT, timesteps range from $t = \lowessTDeltaEarly$ to $t = \lowessTDeltaLate$ under an EG-C2F policy.
At $t = 0$, Table~\ref{supp:tab-passive-comparison} reports the corresponding passive comparison against DINOv3 ViT-B/16 (our passive teacher) and DINOv3 ViT-S/16.

To compute the trends shown in Figure~\ref{fig:resolution-and-mask-size-analysis}\,A--B, we select the first timestep (\textbf{A}) or a difference of timesteps (\textbf{B}) and apply locally-weighted scatter-plot smoothing to the scatter of (mask area, IoU) datapoints, using the Python package statsmodels~\cite{seabold2010} with a smoothing fraction of \lowessFrac{}. We disable iterative-based reweightings (i.e.\ we set $\mathrm{it} = \lowessIt$) to prevent outliers from being artificially downweighted. The \lowessCiPct\% confidence intervals were computed with \lowessNBoot{} bootstrap resamples.

\textbf{Panel C.} For CanViT-B, we sweep the EG-C2F glimpse budget from $t = 0$ to $t = 20$ at four canvas resolutions ($8^2$, $16^2$, $32^2$, $64^2$), producing one curve per grid.
For DINOv3 ViT-B/16, we sweep input resolution over $128$, $144$, $160$, $192$, $256$, $384$, and $512$\,px, with one independently trained linear probe per resolution.
Figure~\ref{supp:fig-canvas-grid-impact} visualizes CanViT-B mIoU at canvas grids $c \times c$ ($c \in \{8, 16, 32, 64\}$) under the C2F policy across all $T = \adeMaxT$ timesteps.

\begin{table}[h]
  \caption{\textbf{Passive-vision comparison: ADE20K mIoU at $t = 0$ (single full-scene glimpse).} CanViT-B vs.\ DINOv3 ViT-B/16 and ViT-S/16 at various input and output resolutions. All models use frozen features with a linear segmentation probe.}
  \label{supp:tab-passive-comparison}
  \centering
  \small
  \setlength{\tabcolsep}{4pt}%
  \begin{tabular}{lcccc}
    \toprule
    \textbf{Model} & \textbf{Input} & \textbf{Scene grid} & \textbf{GFLOPs} & \textbf{mIoU (\%)} \\
    \midrule
    DINOv3 ViT-S/16 & 128\,px & $8 \times 8$ & 3.1 & 25.2 \\
    DINOv3 ViT-S/16 & 144\,px & $9 \times 9$ & 3.8 & 27.1 \\
    DINOv3 ViT-S/16 & 160\,px & $10 \times 10$ & 4.7 & 29.4 \\
    DINOv3 ViT-S/16 & 192\,px & $12 \times 12$ & 6.8 & 32.3 \\
    DINOv3 ViT-S/16 & 256\,px & $16 \times 16$ & 12.5 & 36.9 \\
    DINOv3 ViT-S/16 & 384\,px & $24 \times 24$ & 31.2 & 40.9 \\
    DINOv3 ViT-S/16 & 512\,px & $32 \times 32$ & 63.8 & 43.3 \\
    \midrule
    DINOv3 ViT-B/16 & 128\,px & $8 \times 8$ & 12.0 & 28.8 \\
    DINOv3 ViT-B/16 & 144\,px & $9 \times 9$ & 15.0 & 31.0 \\
    DINOv3 ViT-B/16 & 160\,px & $10 \times 10$ & 18.4 & 33.2 \\
    DINOv3 ViT-B/16 & 192\,px & $12 \times 12$ & 26.3 & 35.9 \\
    DINOv3 ViT-B/16 & 256\,px & $16 \times 16$ & 47.2 & 40.6 \\
    DINOv3 ViT-B/16 & 384\,px & $24 \times 24$ & 111.8 & 44.8 \\
    DINOv3 ViT-B/16 & 512\,px & $32 \times 32$ & 215.0 & 47.2 \\
    \midrule
    CanViT-B (t=0, full scene) & 128\,px & $8 \times 8$ & 13.8 & 29.3 \\
    CanViT-B (t=0, full scene) & 128\,px & $9 \times 9$ & 13.9 & 29.5 \\
    CanViT-B (t=0, full scene) & 128\,px & $10 \times 10$ & 13.9 & 31.5 \\
    CanViT-B (t=0, full scene) & 128\,px & $12 \times 12$ & 14.0 & 33.4 \\
    CanViT-B (t=0, full scene) & 128\,px & $16 \times 16$ & 14.2 & 35.4 \\
    CanViT-B (t=0, full scene) & 128\,px & $24 \times 24$ & 14.9 & 37.6 \\
    CanViT-B (t=0, full scene) & 128\,px & $32 \times 32$ & 15.8 & 38.5 \\
    CanViT-B (t=0, full scene) & 128\,px & $64 \times 64$ & 22.1 & 39.6 \\
 
    \bottomrule
  \end{tabular}
\end{table}

\begin{figure}[!ht]
  \centering
  \includegraphics{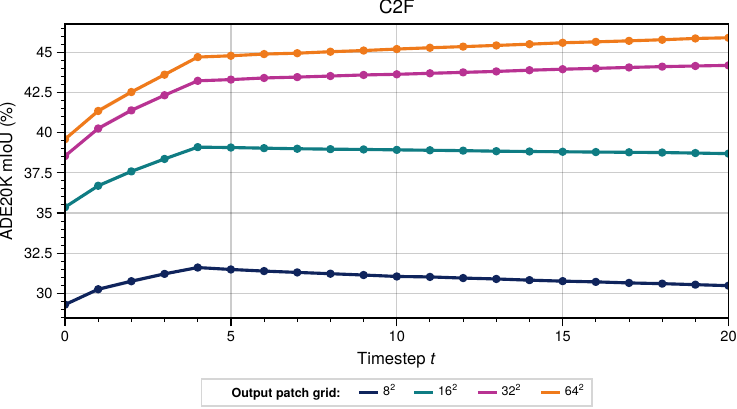}
  \caption{\textbf{Impact of canvas grid size on CanViT-B ADE20K mIoU (C2F policy, $T = \adeMaxT$).}
  Mean mIoU across $n = \canvasGridImpactN$ evaluation runs, at each timestep, at canvas grids $c \times c$ ($c \in \{\canvasGridImpactGrids\}$).}
  \label{supp:fig-canvas-grid-impact}
\end{figure}

\clearpage
\subsection{DINOv3 ImageNet-1k linear classification probes}
\label{supp:in1k-dinov3-probe-training}

DINOv3~\cite{simeoniDINOv32026} was released with a pretrained ImageNet-1k linear classification head only for the 7B-parameter flagship model, not for the smaller ViT checkpoints (ViT-S/16, ViT-S+/16, ViT-B/16, ViT-L/16, ViT-H+/16). Additionally, only ImageNet-ReAL top-1 accuracy, rather than standard ImageNet-1k top-1 validation accuracy, was reported for those non-flagship checkpoints.
We trained linear probes for all five smaller DINOv3 ViT models, which all match or exceed the best IN1k-ReAL top-1 validation accuracy reported by the DINOv3 authors (Table~\ref{tab:dinov3-probes}).

\textbf{Feature extraction.} We extract CLS tokens from the last transformer layer of each DINOv3 ViT model at $512 \times 512$ input resolution (1024 patch tokens with $16 \times 16$\,px patches), using bfloat16 automatic mixed precision for the forward pass. CLS tokens are stored in float32. Data loading uses Inception-crop augmentation for the training split. Features are pre-computed and cached to disk before any probe training begins. This decouples the expensive backbone forward pass from the probe optimization, which iterates over cached feature vectors rather than running the backbone or loading images. This makes extensive hyperparameter search over the full training set practical. Since augmentation is applied at extraction time and baked into the cached features, each extraction run through the training split (with different random shuffling and augmentation) produces a separate file. To train with $N$ augmentation epochs, $N$ extraction passes must be run and stored. During probe training, each file is treated as an independent epoch.

\textbf{Hyperparameter (HP) search.} For each backbone, we sweep the following HPs with Optuna~\cite{akibaOptunaNextgenerationHyperparameter2019} to maximize ImageNet-ReAL top-1 validation accuracy~\cite{beyerAreWeDone2020}: loss (softmax or sigmoid cross-entropy); optimizer (AdamW or SGD); reference learning rate ($10^{-7}$ to $10^{-2}$, log-uniform); batch size (one of the top three powers of 2 fitting in GPU memory for that backbone); AdamW weight decay ($0$ to $0.1$, uniform), $\beta_1$ ($0.1$ to $0.99$, log-uniform), and $\beta_2 = \beta_1 + g (1 - \beta_1)$ with $g \in [0.01, 1.0]$ log-uniform (ensuring $\beta_2 > \beta_1$); SGD momentum ($0$ to $0.99$, uniform); and weight initialization (Gaussian $\mathcal{N}(0, 0.01)$ with zero bias, or PyTorch default). The peak learning rate is the reference learning rate times the batch size (linear scaling rule), with linear warmup over the first 10\% of steps followed by cosine annealing to zero. We report selected HPs in Table~\ref{tab:dinov3-probe-hparams}.

\begin{table}[h]
  \caption{\textbf{DINOv3 IN1k linear probe accuracy at $512 \times 512$ input resolution.}}
  \label{tab:dinov3-probes}
  \centering
  \begin{tabular}{lcc}
    \toprule
    \textbf{Model} & \textbf{IN-ReAL top-1 (official / ours)} & \textbf{IN1k top-1 (ours)} \\
    \midrule
    ViT-S/16 & 87.0\% / \textbf{87.08\%} & 81.40\% \\
    ViT-S+/16 & 88.0\% / \textbf{88.08\%} & 82.89\% \\
    ViT-B/16 & 89.3\% / \textbf{89.54\%} & 85.00\% \\
    ViT-L/16 & 90.2\% / \textbf{90.42\%} & 87.44\% \\
    ViT-H+/16 & 90.3\% / \textbf{90.31\%} & 87.65\% \\
 
    \bottomrule
  \end{tabular}
\end{table}

\begin{table}[h]
  \caption{\textbf{Selected hyperparameters for each DINOv3 IN1k linear probe.}}
  \label{tab:dinov3-probe-hparams}
  \centering
  \small
  \setlength{\tabcolsep}{2pt}%
  \begin{tabular}{@{}lccccccc@{}}%
    \toprule
    \textbf{Model} & \textbf{Loss} & \textbf{Batch size} & \textbf{Peak LR} & \textbf{Weight decay} & \textbf{$(\beta_1, \beta_2)$} & \textbf{DINOv3 init} & \textbf{Epochs (aug., total)} \\
    \midrule
    ViT-S/16 & softmax & 2048 & $6.6 \times 10^{-3}$ & $6.0 \times 10^{-2}$ & $(0.22, 0.71)$ & Yes & 10, 20 \\
    ViT-S+/16 & softmax & 1024 & $1.1 \times 10^{-3}$ & $3.8 \times 10^{-2}$ & $(0.10, 0.94)$ & No & 10, 20 \\
    ViT-B/16 & softmax & 1024 & $2.8 \times 10^{-4}$ & $9.2 \times 10^{-2}$ & $(0.56, 0.73)$ & Yes & 10, 20 \\
    ViT-L/16 & sigmoid & 2048 & $3.0 \times 10^{-3}$ & $3.1 \times 10^{-2}$ & $(0.40, 0.63)$ & Yes & 10, 20 \\
    ViT-H+/16 & sigmoid & 1024 & $8.2 \times 10^{-3}$ & $4.9 \times 10^{-2}$ & $(0.78, 0.93)$ & No & 8, 16 \\
 
    \bottomrule
  \end{tabular}
\end{table}

\clearpage
\section{Pretraining ablations}
\label{supp:ablation-study}

\begin{figure}[h]
  \centering
  \includegraphics[width=\linewidth]{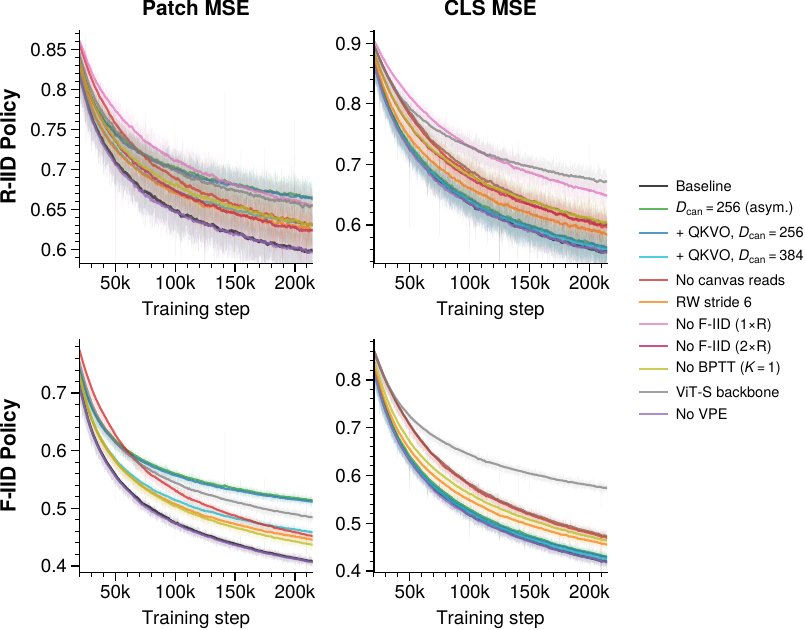}
  \caption{\textbf{Ablation study: loss curves during ImageNet-21k pretraining.}
  Faint lines: logged per-batch losses. Bold lines: exponential moving average of logged losses.}
  \label{fig:ablation-loss-curves}
\end{figure}

\begin{table}[h]
  \caption{\textbf{Ablation study: reconstruction quality on ADE20K validation set.}
  We compare CanViT's predicted DINOv3-ViT-B features against reference teacher features after \ablT{} \ablPolicyName{} glimpses.
  GFLOPs: per-timestep forward pass cost.
  $\Delta$: \% change in cosine similarity relative to baseline (mean over $n = \ablNRuns$ evaluation runs per variant).
  Sp.\ = spatial (dense, patch-level); CLS = CLS-token.}
  \label{tab:ablations}
  \centering
  \small
  \begin{tabular}{lcccc}
    \toprule
    \textbf{Variant} & \textbf{Params} & \textbf{GFLOPs} & \textbf{$\Delta$ Sp.\ Cos} & \textbf{$\Delta$ CLS Cos} \\
    \midrule
    CanViT-B (baseline) & 95.2M & 15.8 & ref & ref \\
    \midrule
    \textbf{a} $D_\mathrm{can} = 256$, asymmetric ($n_h = 2$) & 88.1M & 13.2 & $-$12.0\% & $-$2.2\% \\
    \textbf{b} $D_\mathrm{can} = 256$, + QKVO ($n_h = 2$) & 88.8M & 14.8 & $-$11.7\% & $-$1.5\% \\
    \textbf{c} $D_\mathrm{can} = 384$, + QKVO ($n_h = 3$) & 91.0M & 17.3 & $-$5.8\% & $-$1.1\% \\
    \textbf{d} No canvas reads & 90.4M & 14.2 & $-$6.6\% & $-$8.0\% \\
    \textbf{e} RW stride $= 6$ (1R / 1W) & 88.8M & 13.7 & $-$4.0\% & $-$6.0\% \\
    \textbf{f} No dense supervision & 95.2M & 15.8 & $-$98.8\% & $-$9.0\% \\
    \textbf{g} No F-IID, $1 \times$ R-IID & 95.2M & 15.8 & $-$9.4\% & $-$16.2\% \\
    \textbf{h} No F-IID, $2 \times$ R-IID & 95.2M & 15.8 & $-$5.2\% & $-$9.2\% \\
    \textbf{i} No BPTT ($K = 1$) & 95.2M & 15.8 & $-$3.9\% & $-$7.5\% \\
    \textbf{j} $D_\mathrm{bb} = 384$ (= ViT-S backbone) & 26.4M & 5.9 & $-$8.4\% & $-$21.1\% \\
    \textbf{k} No VPE token & 95.2M & 15.6 & $-$0.1\% & $-$0.2\% \\
 
    \bottomrule
  \end{tabular}
\end{table}

To assess the influence of our design choices at the level of architecture and pretraining, we conducted an ablation study using short pretraining runs.
For our ablation baseline and each ablated variant, we allocated approximately \ablStepsK{} optimizer steps (slightly over 10\% of our flagship checkpoint's pretraining compute), with a learning rate warmup of 20k steps.

We report training loss curves on Figure~\ref{fig:ablation-loss-curves}, disaggregated by reconstruction target (patch/CLS) and by policy (R-IID/F-IID).
As our chosen step count results in slightly more than 1 epoch on the 13.2 million ImageNet-21k images in our pretraining dataset, these training curves are representative of the model's generalization capabilities.

To further quantify generalization, we evaluated each ablation variant on the ADE20K validation set by measuring cosine similarity between the canvas reconstruction and DINOv3 ViT-B teacher features under the R-IID policy.
We report the relative change compared to baseline on Table~\ref{tab:ablations}, alongside computational footprint and parameter count, with \bootstrapCiPct\% CIs and $t = 0$ results in Table~\ref{tab:ablation-eval-supp}.

\begin{table}[t]
  \caption{\textbf{Ablation study: reconstruction quality on ADE20K validation set (supplementary).}
  Cosine similarity (\%) with DINOv3-B teacher features; cells are \bootstrapCiPct\% bootstrap CIs across $n = \ablNRuns$ independent \ablPolicyName{} runs per variant.
  Sp.\ = spatial (patch-level), CLS = CLS-token.}
  \label{tab:ablation-eval-supp}
  \centering
  \small
  \begin{tabular}{lcccc}
    \toprule
    \multirow{2}{*}{\textbf{Variant}}
      & \multicolumn{2}{c}{\textbf{$t = 0$}}
      & \multicolumn{2}{c}{\textbf{$t = \ablTLast$}} \\
    \cmidrule(l){2-3} \cmidrule(l){4-5}
    & Sp.\ & CLS & Sp.\ & CLS \\
    \midrule
    CanViT-B (baseline) & 46.9--47.3 & 46.3--46.9 & 71.3--71.4 & 69.6--69.6 \\
    \midrule
    \textbf{a} $D_\mathrm{can} = 256$, asymmetric & 42.6--43.1 & 45.7--46.5 & 62.7--62.8 & 68.0--68.1 \\
    \textbf{b} $D_\mathrm{can} = 256$, + QKVO & 42.6--43.2 & 45.9--46.7 & 63.0--63.0 & 68.5--68.6 \\
    \textbf{c} $D_\mathrm{can} = 384$, + QKVO & 44.7--45.0 & 45.9--46.4 & 67.1--67.2 & 68.8--68.9 \\
    \textbf{d} No canvas reads & 45.3--45.7 & 44.2--44.8 & 66.6--66.7 & 64.0--64.1 \\
    \textbf{e} RW stride $= 6$ & 44.9--45.2 & 44.0--44.3 & 68.4--68.5 & 65.3--65.5 \\
    \textbf{f} No dense supervision & 0.9--0.9 & 43.4--43.9 & 0.9--0.9 & 63.3--63.4 \\
    \textbf{g} No F-IID, $1 \times$ R-IID & 43.6--44.1 & 40.2--40.8 & 64.6--64.6 & 58.2--58.4 \\
    \textbf{h} No F-IID, $2 \times$ R-IID & 45.5--46.0 & 43.3--43.9 & 67.6--67.7 & 63.1--63.3 \\
    \textbf{i} No BPTT ($K = 1$) & 45.5--45.9 & 44.6--45.0 & 68.5--68.6 & 64.3--64.5 \\
    \textbf{j} $D_\mathrm{bb} = 384$ (ViT-S) & 42.8--43.3 & 37.6--38.0 & 65.3--65.4 & 54.9--54.9 \\
    \textbf{k} No VPE token & 46.8--47.3 & 46.3--46.8 & 71.2--71.3 & 69.4--69.5 \\
 
    \bottomrule
  \end{tabular}
\end{table}

\textbf{Capacity--expressiveness trade-offs.}
The removal of canvas-side QKVO projections is key to the low overhead of Canvas Attention.
On any given training or inference budget, this allows for more frequent canvas--backbone interactions, a larger canvas embedding dimension (semantic resolution), and the use of more canvas patches (spatial resolution).
However, at fixed canvas dimensionality, the ablation of these projections reduces the expressiveness of each individual cross-attention operation.
To assess the well-foundedness of this trade-off, we reduced canvas dimensionality from $D_{\mathrm{can}} = 1024$ to $D_{\mathrm{can}} = 256$, leading to a dramatic drop in patch reconstruction quality (\abldelta{DcanTwoFiveSix}{SceneNorm}, Table~\ref{tab:ablations}\,\textbf{a}).
Re-introducing canvas-side QKVO projections in a FLOP-matched manner forces the use of a small canvas, resulting in a loss of per-position information capacity and a failure to rescue reconstruction quality via increased expressiveness (\abldelta{QkvoDcanTwoFiveSix}{SceneNorm} and \abldelta{QkvoDcanThreeEightFour}{SceneNorm} respectively, Table~\ref{tab:ablations}\,\textbf{b,c}).

\textbf{Frequency and directionality of canvas--backbone interaction.}
CanViT interleaves Canvas Attention Read/Write operations along depth.
In CanViT-B, this corresponds to 3 reads and 3 writes per glimpse (R/W stride of 2), evenly spread across its ViT-B backbone's 12 Transformer blocks.
Write operations are required in order to update the canvas and produce dense outputs.
In contrast, Read operations can be readily ablated; doing so results in a large drop in both patch-level (\abldelta{NoReads}{SceneNorm}) and CLS-level (\abldelta{NoReads}{ClsNorm}) reconstruction quality (Table~\ref{tab:ablations}\,\textbf{d}).
This highlights the benefit of canvas-to-backbone communication, which underpins top-down recurrent feedback across timesteps, indirect canvas-to-canvas interaction via the backbone, and generally allows backbone-side computation to benefit from the high-capacity workspace constituted by the canvas.
When increasing the R/W stride from 2 to 6 Transformer blocks, which results in just 1 read and 1 write per glimpse, we observe a similar yet slightly less pronounced effect (\abldelta{RwStrideSix}{SceneNorm}, Table~\ref{tab:ablations}\,\textbf{e}).
Together, these results show the benefits of frequent, bidirectional Canvas Attention operations, and point at the importance of within-glimpse canvas refinement and contextually-aware backbone computation.

\textbf{Dense latent supervision.}
Omitting dense supervision is contrary to the goal of using a frozen CanViT's canvas features for dense tasks.
However, this objective-level intervention has no effect on the model's architecture or raw expressiveness, and could theoretically enhance CLS reconstruction by allowing the model's representations to be specialized for this purpose, rather than requiring them to support both CLS-level and patch-level reconstruction.
In our ablation study, the reverse was true (\abldelta{NoDense}{ClsNorm}, Table~\ref{tab:ablations}\,\textbf{f}): removing patch-level supervision \emph{degrades} CLS reconstruction, indicating that the patch-level objective also benefits CLS-level learning rather than competing with it for capacity.

\textbf{F-IID rollouts.}
During pretraining, we average losses and gradients across two rollouts that start from a random (R-IID) or full-scene zoomed-out (F-IID) viewpoint for each scene.
The inclusion of a F-IID rollout ensures that at least one glimpse has full spatial coverage and that the full-scene viewpoint, $(x = 0, y = 0, s = 1)$, can be seen during training.
Simply removing the F-IID rollout (\abldelta{NoFiidOneRiid}{SceneNorm} spatial, Table~\ref{tab:ablations}\,\textbf{g}) dramatically slows down the decrease \emph{of the R-IID loss}; however, this also halves the total number of glimpses per optimizer step.
To control for this, we trained a second variant that replaces the F-IID rollout with a second R-IID rollout, preserving the total number of glimpses per step (Table~\ref{tab:ablations}\,\textbf{h}).
The controlled variant still incurs a substantial degradation (\abldelta{NoFiidTwoRiid}{SceneNorm} spatial, \abldelta{NoFiidTwoRiid}{ClsNorm} CLS), confirming that the full-scene viewpoint itself is beneficial, beyond its contribution as an additional rollout.

\textbf{Temporal credit assignment.}
CanViT uses the smallest possible truncated \ac{BPTT} chunk size, $K = 2$, in order for gradient updates to take into account a glimpse and its successor.
Setting $K = 1$ roughly halves the backward-pass memory footprint, but eliminates gradient flow across time altogether.
Given temporally-dense supervision and highly-informative canvas tokens, it may still be possible to obtain meaningful results with $K=1$, as the model learns to greedily produce a best-guess reconstruction at each timestep, which incidentally produces an informative canvas for the next timestep to reuse.
In this ablation, we also decrease the stop probability from 0.5 to 0.25 to keep the expected number of glimpses per optimizer step comparable.
We find that removing BPTT (Table~\ref{tab:ablations}\,\textbf{i}) degrades both spatial (\abldelta{NoBptt}{SceneNorm}) and CLS (\abldelta{NoBptt}{ClsNorm}) reconstruction, indicating that even minimal temporal gradient flow ($K = 2$) contributes meaningfully to learning.

\textbf{Backbone embedding dimension.}
Reducing the backbone embedding dimension from $D_{\mathrm{bb}} = 768$ to $D_{\mathrm{bb}} = 384$ is exactly equivalent to using a ViT-S backbone, rather than ViT-B.
This results in the largest drop in parameter count, per-glimpse computational footprint, and CLS reconstruction quality (\abldelta{VitS}{ClsNorm}) across all ablations.
However, the impact of a narrower backbone on patch (spatial) reconstruction (\abldelta{VitS}{SceneNorm}), while significant, remains lower than that of several other ablations.

\textbf{VPE token.}
Among all considered ablations, the removal of the VPE token had the lowest impact across both policies and both loss types (\abldelta{NoVpe}{SceneNorm} spatial, \abldelta{NoVpe}{ClsNorm} CLS; Figure~\ref{fig:ablation-loss-curves}, Table~\ref{tab:ablations}\,\textbf{k}), consistent with its role as an architectural affordance for future extensions of CanViT to end-to-end policy learning rather than as a key component of the proposed architecture.

\clearpage
\section{Inference latency}
\label{supp:latency-experiments}

\begin{figure}[h]
  \centering
  \includegraphics[width=\linewidth]{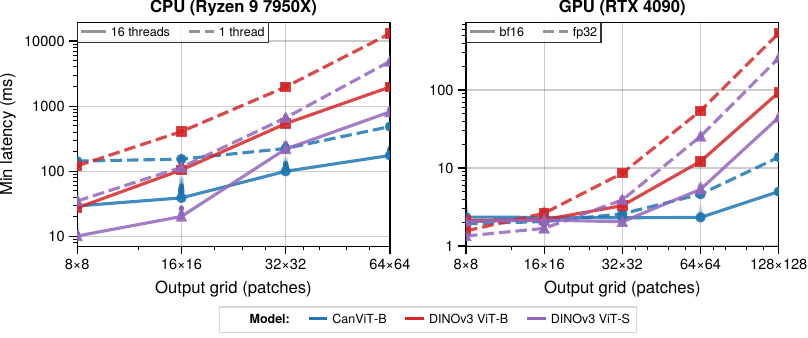}
  \caption{\textbf{Latency scaling with output resolution.} Faint scatter: individual iterations.}
  \label{fig:inference-efficiency}
\end{figure}

\begin{table}[h]
  \caption{\textbf{Latency and peak memory by hardware, precision, and output resolution.}}
  \label{tab:hw-bench}
  \centering
  \footnotesize%
  \setlength{\tabcolsep}{4pt}%
  \setlength{\aboverulesep}{1pt}\setlength{\belowrulesep}{1pt}%
  \begin{tabular}{llcccccc}
    \toprule
    \multirow{2}{*}{\textbf{Device}} & \multirow{2}{*}{\textbf{Scene grid}}
      & \multicolumn{2}{c}{\textbf{CanViT-B}}
      & \multicolumn{2}{c}{\textbf{DINOv3 ViT-B/16}}
      & \multicolumn{2}{c}{\textbf{DINOv3 ViT-S/16}} \\
    \cmidrule(l){3-4} \cmidrule(l){5-6} \cmidrule(l){7-8}
    & & ms & MB & ms & MB & ms & MB \\
    \midrule
    CPU, 1T & $8 \times 8$ & 143 & --- & 121 & --- & \textbf{34.8} & --- \\
    CPU, 1T & $16 \times 16$ & 154 & --- & 407 & --- & \textbf{115} & --- \\
    CPU, 1T & $32 \times 32$ & \textbf{224} & --- & 1975 & --- & 663 & --- \\
    CPU, 1T & $64 \times 64$ & \textbf{486} & --- & 13049 & --- & 4882 & --- \\
    \midrule
    CPU, 16T & $8 \times 8$ & 29.3 & --- & 27.4 & --- & \textbf{10.1} & --- \\
    CPU, 16T & $16 \times 16$ & 39.0 & --- & 104 & --- & \textbf{20.2} & --- \\
    CPU, 16T & $32 \times 32$ & \textbf{99.8} & --- & 539 & --- & 219 & --- \\
    CPU, 16T & $64 \times 64$ & \textbf{175} & --- & 1969 & --- & 817 & --- \\
    \midrule
    CUDA, AMP bf16 & $8 \times 8$ & 2.36 & 397 & 2.12 & 359 & \textbf{2.07} & \textbf{100} \\
    CUDA, AMP bf16 & $16 \times 16$ & 2.33 & 402 & 2.18 & 362 & \textbf{2.16} & \textbf{101} \\
    CUDA, AMP bf16 & $32 \times 32$ & 2.31 & 419 & 3.30 & 388 & \textbf{2.05} & \textbf{115} \\
    CUDA, AMP bf16 & $64 \times 64$ & \textbf{2.33} & 499 & 12.2 & 416 & 5.34 & \textbf{135} \\
    CUDA, AMP bf16 & $128 \times 128$ & \textbf{4.99} & 794 & 93.3 & 607 & 44.6 & \textbf{250} \\
    \midrule
    CUDA, fp32 & $8 \times 8$ & 1.93 & 395 & 1.57 & 356 & \textbf{1.35} & \textbf{99} \\
    CUDA, fp32 & $16 \times 16$ & 2.06 & 408 & 2.64 & 359 & \textbf{1.69} & \textbf{101} \\
    CUDA, fp32 & $32 \times 32$ & \textbf{2.59} & 430 & 8.60 & 378 & 3.90 & \textbf{112} \\
    CUDA, fp32 & $64 \times 64$ & \textbf{4.62} & 544 & 54.3 & 454 & 25.5 & \textbf{154} \\
    CUDA, fp32 & $128 \times 128$ & \textbf{13.8} & 996 & 535 & 758 & 257 & \textbf{326} \\
 
    \bottomrule
  \end{tabular}
\end{table}

We report minimum latency and peak memory scaling in Figure~\ref{fig:inference-efficiency} and Table~\ref{tab:hw-bench}.
Benchmarks run on an NVIDIA GeForce RTX 4090 and on an AMD Ryzen 9 7950X 16-Core Processor.
CUDA benchmarks use both float32 and AMP bfloat16, and \texttt{torch.compile}.
CPU benchmarks use float32 in eager mode, with 1 thread or 16 threads (the physical-core count).

Each timed iteration processes a single input (batch size $B = 1$) with explicit device synchronization both before and after.
After \benchWarmupItersPhrase{}, measurement runs at least \benchMinItersPhrase{} and continues until either a \benchTimeBudgetS{}-second budget or a \benchMaxIters{}-iteration limit is reached, whichever comes first.
\ifnum\benchNPasses>1\relax We repeat each configuration \benchNPassesPhrase{} and pool the per-iteration latencies. \fi
Within the measurement window, we record peak GPU memory and minimum forward-pass latency.
To vary the scene grid size during benchmarking, we adjust the canvas resolution for CanViT and the input (and thus, output) resolution for DINOv3, as in Figure~\ref{fig:resolution-and-mask-size-analysis} and Appendix~\ref{supp:iou-obj}.

\clearpage
\section{Interpretability}
\label{supp:interpretability}

\textbf{PCA Visualizations.}
To visualize grids of high-dimensional glimpse or canvas patch tokens as RGB images (Figure~\ref{fig:example-canvit-rollout}, Figure~\ref{fig:canvit-high-level-arch}, Figure~\ref{fig:canvas-attention}, Figure~\ref{fig:canvas-evolution}), we adopt a similar approach to that of DINOv3~\cite{simeoniDINOv32026}, by performing \ac{PCA} across tokens then mapping groups of three consecutive PCs to RGB. In order to prevent global variance from washing out local detail when visualizing a subset of the canvas, we apply min-max scaling to the resulting RGB channels across the visible region. We apply this protocol to layer-normalized~\cite{baLayerNormalization2016} tokens, such that each individual token has unit variance and zero mean across its dimensions.

\begin{figure}[h]
  \centering
  \includegraphics[width=\linewidth]{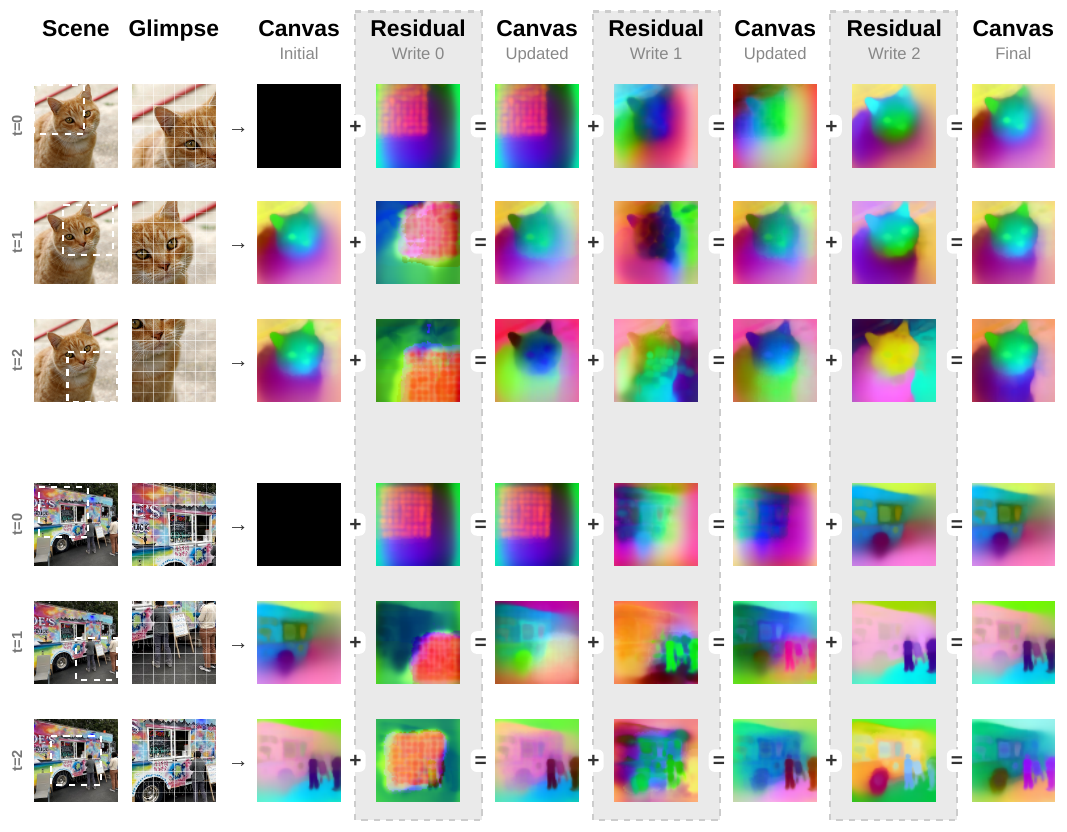}
  \caption{\textbf{Canvas updates within and across glimpses.} CanViT performs multiple Canvas Attention Write operations per glimpse, each producing a residual that is summed with the canvas (Figure~\ref{fig:canvit-high-level-arch}). To isolate the contribution of individual Write operations, we capture intermediate residuals and canvases after each Write operation within a $128^2$\,px glimpse, and visualize these snapshots with \acf{PCA} across tokens. We compute PCA bases independently for all snapshots rather than using a shared basis, in order to visualize the spatial structure of each individual snapshot. We observe an emergent progression in the spatial structure of residuals as within-glimpse processing unfolds from Write~0 to Write~2, from localized and feature-level content to scene-wide and semantically grounded content. Write-0 residuals reflect the structure of the glimpse's $8 \times 8$ patch grid. Write-1 residuals begin to reflect the structure of the objects within the glimpse. Write-2 residuals extrapolate beyond the confines of the glimpse, and delineate sharper object boundaries.}
  \label{fig:canvas-evolution}
\end{figure}

\clearpage
\section{Declarations}
\label{supp:declarations}

\subsection{Availability of code and data}
\label{supp:declarations:code-and-data}
Our primary code repository is \url{https://github.com/m2b3/CanViT-PyTorch}.
We release all of our code under the MIT license under the \texttt{m2b3} GitHub organization,
including model definition and core utilities (\texttt{CanViT-PyTorch}), pretraining code (\texttt{CanViT-pretrain}),
DINOv3/CanViT IN1k and ADE20K probe fitting (\texttt{dinov3-in1k-probes} and
\texttt{CanViT-specialize}), evaluation code (\texttt{CanViT-eval}), and figure/data export code
(\texttt{CanViT-paper-exporter}).
We also release all the checkpoints used in our evaluations, under the MIT
license as well, alongside all data necessary to regenerate figures and tables,
at \url{https://figshare.com/s/3f05748d12b01bdad5b3} and \url{https://huggingface.co/canvit}.

\subsection{Broader impact}
\label{supp:declarations:broader-impact}
CanViT's constant-memory recurrent architecture, fast sequential inference and scalability to large scenes make it an attractive candidate for future extension to real-time, high-FPS video processing, embodied active perception, and high-resolution static image processing. In turn, such extensions may support applications in a variety of domains, including edge/physical AI, medical imaging, and camera- or satellite-based environmental monitoring.

As with nearly all general-purpose computer vision models, derivatives of this work might be incorporated into surveillance and security technology. As such, we invite researchers building upon active-vision foundation models to engage with the societal implications of any specific application.

\subsection{Compute reporting}
\label{supp:declarations:compute-reporting}
Over the course of this project, we used approximately 2500 H100-equivalent hours on our SLURM-based compute platform. Our pretraining runs required under 24 GB of GPU memory (18.2 GB for the 2M-step CanViT-B pretraining run described in Appendix~\ref{supp:pretraining-details}). For interactive development, probe training, and downstream evaluations, we additionally used a single NVIDIA RTX 4090 workstation; we did not have fine-grained usage tracking in place there, but estimate a total of around 2000 RTX 4090 GPU-hours.

ImageNet-1k fine-tuning experiments were run on Google Cloud TPU v6e-4 spot instances using \texttt{torch\_xla}. The total cost of these experiments was under 800 USD. The total runtime of our reported IN1k fine-tuning run was under 15 wall-clock hours on TPUv6e-4 using SPMD data parallelism. We did not perform extensive optimization of our \texttt{torch\_xla} code.

The numbers above include all failed runs, preliminary experiments, and hyperparameter sweeps.

\subsection{Statistical analysis}
\label{supp:declarations:statistical-analysis}

All confidence intervals reported in this paper are \bootstrapCiPct\% bootstrap CIs.
For the four stochastic viewing policies (F-IID, R-IID, C2F, F2C), each \emph{evaluation run} is one full pass over the evaluation set under a fresh policy seed. We bootstrap the mean across runs (percentile method, 10{,}000 resamples; \texttt{scipy.stats.bootstrap}). We plot these CIs as shaded bands (often too tight to be visible) in Figure~\ref{fig:main-ade20k-in1k-results} and Figure~\ref{supp:fig-canvas-grid-impact}. The other two policies (EG-C2F, RFS) are deterministic and yield single-run point estimates.
For Figure~\ref{fig:resolution-and-mask-size-analysis}\textbf{A,B}, we bootstrap a LOWESS regression across ground-truth masks (1{,}000 resamples; full methodology in Appendix~\ref{supp:iou-obj}).

\subsection{Licenses and attribution}
\label{supp:declarations:licenses-and-attribution}

\textbf{Pretraining and evaluation.}
CanViT-B was pretrained on ImageNet-21k's \texttt{winter21\_whole} split~\cite{ridnikImageNet21KPretrainingMasses2021}, and evaluated on the ImageNet-1k~\cite{russakovskyImageNetLargeScale2015} and ADE20K~\cite{zhouSceneParsingADE20K2017} datasets.

\textbf{Example images.}
The cat image (Cat03.jpg) used as an example in figures throughout the paper is sourced from Wikimedia Commons and is attributed to Fir0002/Flagstaffotos under the CC BY-NC 3.0 license. Other example images were sourced from the Places365 dataset~\cite{zhouPlaces10Million2017} and used solely for illustration purposes.

\makeatletter
\if@preprint\else
  \input{checklist}
\fi
\makeatother

\end{document}